\useunder{\uline}{\ul}{}
\definecolor{celestialblue}{rgb}{0.29, 0.59, 0.82}
\definecolor{cerulean}{rgb}{0.0, 0.48, 0.65}
\definecolor{cadmiumorange}{rgb}{0.93, 0.53, 0.18}
\newtheorem{thm}{Theorem}
\newtheorem{lemma}[thm]{Lemma}
\newenvironment{paragraphs}[1]{
    \noindent\textbullet\,\textbf{#1 }\ignorespaces
}{}
\begin{document}

\date{}
\title{\Large \bf Provably Robust Adaptation for Language-Empowered Foundation Models}

\author{
    Yuni Lai\textsuperscript{\textdagger}, Xiaoyu Xue\textsuperscript{\textdagger}, Linghui Shen\textsuperscript{\textdagger}, 
    Yulun Wu\textsuperscript{\textsection}, Gaolei Li\textsuperscript{\textparagraph}, Song Guo\textsuperscript{*}, 
    Kai Zhou\textsuperscript{\textdagger}, Bin Xiao\textsuperscript{\textdagger}\\
    \textsuperscript{\textdagger}Department of Computing, The Hong Kong Polytechnic University\\
    \textsuperscript{\textsection}College of Systems and Engineering, National University of Defense Technology\\
    \textsuperscript{\textparagraph}School of Electronic Information and Electrical Engineering, Shanghai Jiao Tong University\\
    \textsuperscript{*}Department of Computer Science and Engineering, Hong Kong University of Science and Technology
}

\maketitle
\thispagestyle{empty}
\pagestyle{empty}
\begin{abstract}
Language-empowered foundation models (LeFMs), such as CLIP and GraphCLIP, have transformed multimodal learning by aligning visual (or graph) features with textual representations, enabling powerful downstream capabilities like few-shot learning. However, the reliance on small, task-specific support datasets collected in open environments exposes these models to poisoning attacks, where adversaries manipulate the support samples to degrade performance. Existing defenses rely on empirical strategies, which lack formal guarantees and remain vulnerable to unseen and adaptive attacks. Certified robustness offers provable guarantees but has been largely unexplored for few-shot classifiers based on LeFMs.

This study seeks to fill these critical gaps by proposing the \textit{first} provably robust few-shot classifier that is tailored for LeFMs. We term our model \textbf{L}anguage-\textbf{e}mpowered \textbf{F}ew-shot \textbf{C}ertification (\textbf{LeFCert}). It integrates both textual and feature embeddings with an adaptive blending mechanism. To achieve provable robustness, we propose a twofold trimmed mean prototype and derive provable upper and lower bounds for classification scores, enabling certification under worst-case poisoning scenarios. To further enhance the performance, we extend LeFCert with two variants by considering a more realistic and tighter attack budget: LeFCert-L incorporates randomized smoothing to provide Lipschitz continuity and derive robustness under dual budget constraints, and LeFCert-C provides collective certification for scenarios where attackers distribute a shared poisoning budget across multiple samples.
Experiments demonstrate that LeFCert achieves state-of-the-art performance, significantly improving both clean and certified accuracy compared to existing baselines. 
Despite its advanced robustness mechanisms, LeFCert is computationally efficient, making it practical for real-world applications. 

\end{abstract}

\section{Introduction}
\label{Sec:Intro}

\begin{figure}
    \centering
    \includegraphics[width=1\linewidth]{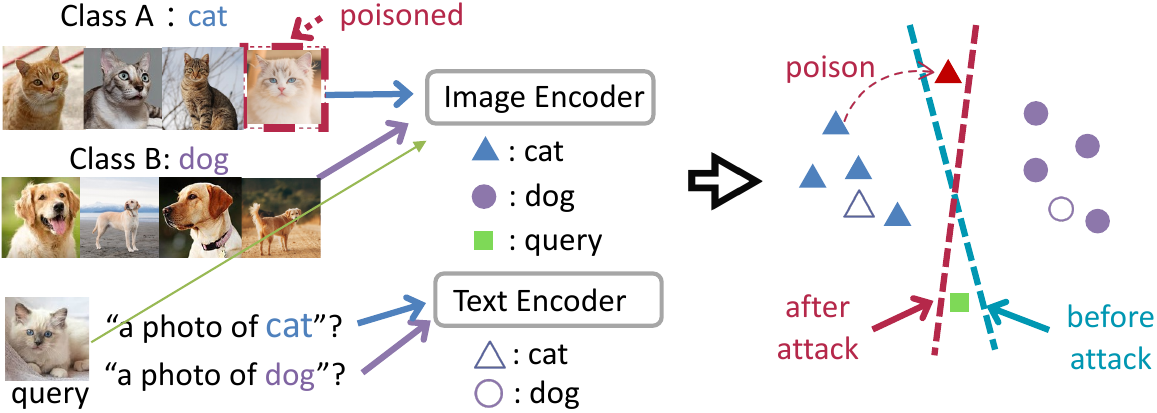}
    \caption{Poisoning attack on LeFM few-shot classifier.}
    \label{fig:poisoning_attack}
    \vspace{-10pt}
\end{figure}

The rapid advancement of \textbf{Language-empowered Foundation Models} (\textbf{LeFMs})~\cite{zhang2024vision,li2025survey}, such as CLIP~\cite{radford2021learning} (vision-language) and GraphCLIP~\cite{zhu2024graphclip} (graph-language model), has transformed multimodal learning by enabling seamless alignment between visual (or graph) features and textual representations. Different from the traditional supervised learning that relies on tremendous fine-grained labels~\cite{zhang2024vision}, these LeFM models achieve large-scale pre-training via image-text (or graph-text) pairs through contrastive learning. 
By aligning these modalities into a shared embedding space, these models unlock powerful downstream capabilities, especially \textbf{few-shot learning}~\cite{song2023comprehensive}. This has led to their widespread adoption in open platforms such as TorchMeta~\cite{deleu2019torchmeta}, LibFewShot~\cite{li2023libfewshot}, and LangChain~\cite{Chase2022LangChain}
, where users can readily adapt pre-trained models to diverse tasks by gathering small amounts of task-specific data for lightweight fine-tuning or prompt learning.

However, the adaptation phase often occurs in relatively open and uncontrolled environments. Task-specific data is frequently collected from diverse and potentially untrustworthy sources, introducing critical security concerns, particularly the risk of \textit{poisoning attacks}~\cite{tian2022comprehensive}. In few-shot learning scenarios, where models rely on a small labeled support set, an adversary can compromise the integrity of this curated data to degrade overall performance or induce targeted misclassifications~\cite{shafahi2018poison,oldewage2021attacking,oldewage2022adversarial,xu2021yet,alhussien2023novel,liu2024does}. The attacker can successfully poison the classifier by adding only a small perturbation to the support samples (as visualized in Figure~\ref{fig:poisoning_attack} and Figure~\ref{fig:pca}). The human imperceptible property of these attacks makes it difficult for the user to manually check the integrity of the support set~\cite{oldewage2022adversarial,wang2024fcert}.
Such poisoning attacks are especially problematic in safety-critical applications, where the reliability of predictions is paramount~\cite{fu2023styleadv,zhou2024fewshot,ghiasvand2025few}. 

\begin{figure*}[!t]
    \centering
    \includegraphics[width=0.9\linewidth]{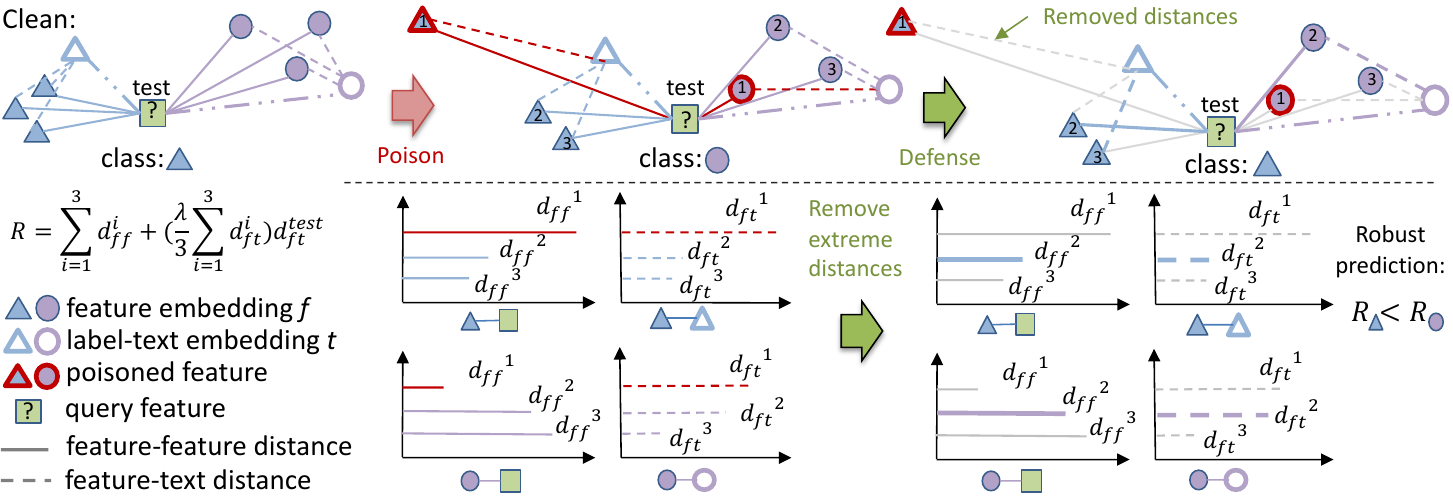}
    \caption{LeFCert ($C=2,K=3$). LeFCert introduces a novel hybrid prototype-based classifier that uniquely integrates textual and feature embeddings. It constructs robust prototypes by removing the highest and lowest distances (outliers). LeFCert provides provable robustness guarantees leveraging the upper and lower bounds of the integrated trimmed-mean prototype distances.}
    \label{fig:Overview}
    \vspace{-5pt}
\end{figure*}
To ensure robust and secure downstream applications of LeFMs, there are several limitations regarding the current few-shot adaptation approaches. Despite the development of few-shot adapters~\cite{alain2017understanding,gao2024clip,farina2025rethinking,zhang2022tip,huang2024lp++,silva2024closer} for LeFMs, their robustness to such adversarial manipulations during adaptation remains underexplored. 
Most existing efforts rely on \textit{empirical defense} strategies, such as adversarial training~\cite{goldblum2020adversarially,dong2022improving,zhou2024few} and robust parameter distillation~\cite{dong2024adversarially}. While these methods enhance robustness against \textit{known attacks}, they lack formal robustness guarantees. Consequently, they remain susceptible to \textit{adaptive} and \textit{unseen} attacks, which can exploit these defenses under stronger attacks. 

In contrast, \textit{certified robustness}, which provides \textit{provable guarantees} of a model's ability to withstand adversarial attacks, has recently gained significant traction in the machine learning community. Techniques such as randomized smoothing\cite{cohen2019certified,lecuyer2019certified,lai2024node,lai2024collective} and de-randomized (deterministic) approaches~\cite{levine2020randomized,levine2021deep,wang2022lethal,li2025agnncert} have emerged as powerful tools to certify model robustness under specific threat models. These \textbf{provable defense} makes sure the model prediction for a testing sample cannot be altered by \textit{arbitrary attacks} within a specified attack budget. 

While existing methods have been successfully applied to traditional vision and graph-based models that rely on supervised learning, certified robustness for few-shot classification based on LeFMs remains largely underexplored. 
Currently, FCert~\cite{wang2024fcert} is the only method tailored to certify the robustness of few-shot classifiers. However, it has not addressed the unique characteristics of LeFMs, such as their dual reliance on support samples and label text embeddings. By neglecting the textual embeddings that underpin the superiority of LeFMs, FCert is unable to fully leverage the strengths of these models (as evidenced by a significant drop in model accuracy when textual information is omitted, as shown in Figure~\ref{fig:images_Lambda}). 
This limitation leaves a critical gap in certified robustness for few-shot classifiers built on LeFMs, highlighting the need for novel approaches that can address these challenges while accounting for the hybrid nature of these models.


This study seeks to fill these critical gaps by proposing the \textit{first} provably robust few-shot classifier tailored for LeFMs. We termed our model as \textbf{L}anguage-\textbf{e}mpowered \textbf{F}ew-shot \textbf{Cert}ification (\textbf{LeFCert}). 
LeFCert integrates feature embeddings (from support samples) and textual embeddings (from label descriptions) into a hybrid prototype-based classifier (as illustrated in Figure~\ref{fig:Overview}). This integration is achieved through an adaptive blending mechanism that dynamically adjusts the contribution of textual and feature information based on their reliability. When support samples closely align with their corresponding label text embeddings, textual information is given more weight. 
This adaptive mechanism enables LeFCert to effectively utilize the complementary strengths of textual and feature embeddings to ensure high model accuracy.

To mitigate the influence of adversarial samples and provide certified robustness, LeFCert employs a twofold trimmed mean to the prototype, which discards the top-$M$ largest and smallest distances. By focusing on the most reliable subset of distances, this mechanism reduces the impact of outliers. Most importantly, we derive closed-form upper and lower bounds for classification distance scores, enabling the certification of predictions under worst-case adversarial attack. These bounds form the basis for a certified robustness condition, ensuring the model's predictions remain consistent under a predefined attack budget: the attacker can arbitrarily poison $T$ support samples. Our proposed LeFCert provides a \textit{deterministic} robustness guarantee with efficient computation. 

Furthermore, to address more realistic threat models and application scenarios, we extend LeFCert with two variants. LeFCert-L is designed for a dual-constraint threat model, where adversaries can perturb the $T$ support set, and at the same time, the perturbed samples are constrained to imperceptible perturbations within an $l_2$-norm ball. By incorporating randomized smoothing, LeFCert-L achieves Lipschitz continuity in the embedding function, ensuring bounded changes in the embedding space under small input perturbations. To further enhance performance, we propose LeFCert-LD, which employs diffusion denoise smoothing to mitigate the accuracy drop caused by Gaussian noise, achieving a better trade-off between clean and certified accuracy. LeFCert-C extends the certification framework to address collective adversarial scenarios, where an attacker distributes a shared poisoning budget across multiple testing samples. Unlike traditional sample-wise certification, which evaluates each test sample independently, LeFCert-C jointly certifies the robustness of multiple test samples by analyzing the worst-case budget allocation. This collective certification approach provides tighter robustness guarantees.

Comprehensive evaluations on benchmark datasets, including CIFAR-FS, Tiered-ImageNet, and CUB200-2011 (for image classification) and Cora and CiteSeer (for node classification), demonstrate that LeFCert significantly outperforms existing methods such as KNN~\cite{jia2022certified}, DPA~\cite{levine2021deep}, and FCert~\cite{wang2024fcert} in both clean accuracy and certified accuracy. 
For example, on CIFAR-FS (5-way, 10-shot), LeFCert achieves a clean accuracy of 98\%, outperforming FCert (88\%) and KNN (80\%). 
On Tiered-ImageNet, LeFCert achieves a certified accuracy of $92\%$ under a poisoning size of $T=3$, compared to FCert's 70\%. On graph datasets, LeFCert consistently achieves higher certified accuracy across diverse poisoning budgets, showcasing its robustness in both structured and unstructured data domains. LeFCert-LD demonstrates superior robustness under dual constraints. On Tiered-ImageNet with $T=7$, LeFCert-LD achieves a certified accuracy of 48\%, outperforming all baselines which is 0\%. Additionally, LeFCert-C achieves significant improvements in collective certification. For instance, on CIFAR-FS at $T=4$, LeFCert-C achieves 94\% certified accuracy, compared to 37\% for FCert. These results highlight the strength of LeFCert-C in modeling realistic adversarial constraints, providing stronger and tighter robustness guarantees. Despite its advanced robustness mechanisms,
LeFCert is computationally efficient. Across all the datasets, LeFCert can verify $5$ testing samples per episode within $0.9s$. LeFCert-C can verify $100$ testing samples within $7s$. 

By integrating textual and feature embeddings while addressing adversarial threats in few-shot learning, LeFCert sets a new benchmark for provably robust adaptation in language-empowered foundation models. Its superior performance, provable guarantees, and efficient computation make it a practical and robust choice for real-world applications where trust and reliability are crucial. 



\section{Background and Related Works}
\subsection{Language-Empowered Foundation Models}
Language-empowered foundation models, such as CLIP~\cite{radford2021learning} and GraphCLIP~\cite{zhu2024graphclip}, represent a significant advancement by aligning visual or graph features with textual semantic representations. These models leverage a dual-encoder architecture, where a \textbf{feature encoder} (e.g., a vision or graph encoder) extracts feature embeddings from vision or graph input, and a \textbf{language encoder} converts textual input into semantic embeddings. The feature encoder, denoted as $\mathcal{F}_{enc}$, maps input data from a high-dimensional input space to a lower-dimensional embedding space \( \mathbb{R}^d \), while the language encoder, \( \mathcal{T}_{enc} \), generates semantic embeddings in the same space \( \mathbb{R}^d \). Then the feature and text encoder is jointly pretrained by contrastive learning that matches the correct image-text pairs or graph-text pairs. 
This alignment allows the model to compute distance scores between input features and textual semantics, making them especially effective in scenarios with limited labeled data.


\subsection{Adaptation of Language-empowered FMs}
\subsubsection{Zero-shot Adaptation}
Language-empowered foundation models have shown significant promise in zero-shot learning. Without any labels, the model can classify new instances with several candidate label texts. Given a new instance $x_{\text{test}}$, zero-shot adapter calculates the similarity (e.g., cosine similarity) between the testing feature embedding $\mathbf{f}_{\text{test}}=\mathcal{F}_{enc}(x_{\text{test}})$ and the label text embedding corresponding to class $c$: $\mathbf{t}_c=\mathcal{T}_{enc}(\text{`A photo of a $\{y_c\}$'})$.  The predicted class is determined as:
\begin{equation}
\label{eqn:language_info}
    \hat{y} = \arg\max_c \, \mathbf{f}_{\text{test}}^\top \mathbf{t}_c,
\end{equation}
where $\mathbf{f}_{\text{test}}\in\mathbb{R}^d$ and $\mathbf{t}_c\in\mathbb{R}^d$.
Nevertheless, without any supervision on the downstream task, the performance is limited.

\subsubsection{Few-shot Adaptation}
A primary way of adapting pretrained foundation models to specific downstream tasks is via few-shot learning, where the goal is to classify new samples using only a small number of labeled examples (support set): $\mathcal{D}=\{(x_1,y_1),(x_2,y_2),\cdots,(x_n,y_n)\}$, where $x_i$ is the input sample, and $y_i\in\{1,2,\cdots, C\}$ is the label. For $C$-way and $K$-shot classification, the number of support samples is $n=CK$.


To achieve few-shot learning, there are several few-shot classifiers. The most representative one is feature-based adapter, such as ProtoNet~\cite{snell2017prototypical}, which calculates the similarity between the testing feature embedding $\mathbf{f}_{\text{test}}$ and the feature embeddings of the support samples belonging to each class. For class $c$, the \textit{class prototype} is computed by averaging the feature embedding in the support set: $\mathbf{p}_c:=\frac{1}{K}\sum_{i=1}^n y_{ic} \mathbf{f}_i,$ where $\mathbf{f}_i=\mathcal{F}_{enc}(x_i)$ is the feature embedding of the $i$-th support sample, $y_{ic} \in \{0,1\}$ indicates whether the $i$-th sample belongs to class $c$. The predicted class is determined as:
\begin{equation}
\label{eqn:feature_info}
    \hat{y} = \arg\max_c \mathbf{f}_{\text{test}}^\top \mathbf{p}_c.
\end{equation}
Linear probing~\cite{alain2017understanding} is another type of feature-only adapter. Instead of calculating the prototype, it trains a linear classifier based on the feature vectors of the support set. 
Although feature-only adapters are efficient in exploiting the features of the support set, they do not consider textual information from the language encoder.

The more advanced few-shot adapter leverages both the strengths of textual information in~\eqref{eqn:language_info} and feature embeddings in~\eqref{eqn:feature_info}. Prompt-based adapter, such as CoOp~\cite{zhou2022learning}, substitutes the fixed label-text with learnable textual tokens, and fine-tunes it on the pre-trained vision-language models with support samples. CLIP-Adapter~\cite{gao2024clip} trains two additional feature adapters and a residual module to fit the downstream task. CLAP~\cite{silva2024closer} initializes the weight of linear probing with a textual embedding prototype. 2SFS~\cite{farina2025rethinking} designs a two-stage few-shot classifier training based on visual and textual vectors. Tip-Adapter~\cite{zhang2022tip} and LP++~\cite{huang2024lp++} train a linear model and blend the image feature and textual feature. For instance, LP++ trains a linear classifier to integrate the visual and text embedding:
\begin{align}
S_c &= \mathbf{f}_{\text{test}}^\top \left(\mathbf{w}_c + \alpha_c \mathbf{t}_c \right),\\
\hat{y} &= \arg\max_c S_c,
\end{align}
where $\mathbf{w}_c$ are learnable model parameters that can be regarded as training-based class prototypes, and $\alpha_c$ is a learnable class-wise blending parameter that adjusts the contribution of the textual embedding. 

\textbf{Limitations}: While these classifiers focus on improving the test accuracy, the robustness against poisoning attacks remains unexplored. Furthermore, due to the complex model structure, it is not straightforward to provide a theoretical guarantee of their robustness. 

\subsection{Adversarial Attacks}
The data scarcity of few-shot learning increases its vulnerability to adversarial attacks~\cite{shafahi2018poison,oldewage2021attacking,oldewage2022adversarial,xu2021yet,alhussien2023novel,liu2024does}. Prior studies~\cite{goldblum2020adversarially,oldewage2021attacking} show that the most common projected gradient descent attack (PGD) can effectively achieve evasion and poisoning attacks to manipulate the few-shot prediction. Specifically, the attack ASP~\cite{oldewage2021attacking} optimizes the poisoned support set \( \mathcal{D}_p = \{ x^*, y \} \) by solving:
\begin{equation}
    \arg \max_{\delta} \mathcal{L}(f(x^*, g(x + \delta, y)), y^*) \quad \text{s.t.} \quad \|\delta\|_\infty < \epsilon,
\end{equation}
where \(x, y\) represent the support set inputs and labels, \(x^*, y^*\) the query inputs and labels, \( \delta \) the perturbation, and \( \epsilon \) the maximum perturbation range. Using PGD, ASP crafts adversarial support points that significantly reduce model accuracy, even when only a small fraction of the support set is poisoned. This highlights a critical vulnerability of few-shot learning systems, especially in sensitive applications.

\subsection{Existing Certified Defenses}
\label{sec:certified_baselines}
To defend against adversarial attacks, existing defense strategies can broadly be categorized into \textbf{empirical defenses} and \textbf{certified defenses}. Prior work mainly investigates empirical defenses based on adversarial training~\cite{goldblum2020adversarially,dong2022improving,zhou2024few}, robust parameter distillation~\cite{dong2024adversarially}. While these methods improve robustness against known attacks, they lack formal robustness guarantees and remain vulnerable to adaptive and unseen attacks, which can compromise their reliability under strong adversaries.

The development of certified defenses has gained traction as a critical approach to ensure robustness against adversarial manipulations~\cite{cohen2019certified,lecuyer2019certified,jia2021intrinsic,levine2021deep,jia2022certified,li2023sok,jia2023pore,wang2024fcert}. They provide provable guarantees of robustness against adversarial manipulations within a specified attack budget, ensuring robustness under the worst-case adversarial scenarios.

In few-shot adaptation, where the data is scarce or new classes arise frequently, certified approaches are required to adapt quickly. Below, we summarize three prominent approaches suitable for few-shot learning:

\begin{paragraphs}{DPA~\cite{levine2021deep}:} 
DPA divides the support set into multiple disjoint partitions using a hash function. Each partition builds an independent classifier, and the final prediction is made through majority voting. The robustness guarantee can be derived because $T$ poisoned samples can only affect at most $T$ classifiers.
\end{paragraphs}

\begin{paragraphs}{KNN~\cite{jia2022certified}:} The k-NN method certifies robustness by taking a majority vote over the \(k\) nearest neighbors of a query sample. Robustness is guaranteed because $T$ poisoned samples can only affect $T$ neighbors. 
\end{paragraphs}

\begin{paragraphs}{FCert~\cite{wang2024fcert}:} 
FCert certifies robustness by analyzing prototype distances of feature embeddings. It computes trimmed means by excluding extreme distances and evaluates upper and lower bounds of predictions under poisoning. As long as the mean distance upper bound is smaller than the lower bound, the prediction is unchanged. However, it is limited to feature embeddings and does not fully utilize textual embeddings in language-empowered models.
\end{paragraphs}

\textbf{Limitations}: There is a lack of certified defenses that integrate robustness guarantees with the unique properties of language-empowered foundation models, such as their reliance on textual embeddings and feature embeddings.





\section{Problem Statement}
\subsection{Threat model}

We assume that the attacker has all the knowledge about the data and model, and the attacker can arbitrarily poison at most $T$ support samples in $\mathcal{D}=\{(x_1,y_1),(x_2,y_2),\cdots,(x_n,y_n)\}$ (arbitrarily modify the $x_i$ or $y_i$ but ensure the number of $C$ classes and $K$ shots).
Let $T^c$ denote the number of poisoning sizes among the support sample with class $c$; we have $T=\sum_{c\in \{1,\cdots, C\}}T^c$. We denote the poisoned support set as $\mathcal{D}_p\in \mathcal{B}(\mathcal{D},T)$, where $\mathcal{B}(\mathcal{D},T)$ is the set of all possible poisoned support sets by manipulating $T$ samples among $\mathcal{D}$. 
By poisoning the support set, the attacker aims to manipulate the prediction result of a testing sample $x_{\text{test}}$. In realistic, there are a set of testing samples $\mathcal{X}_{\text{test}}=\{x_{\text{test}}^{(1)}, x_{\text{test}}^{(2)}, \ldots, x_{\text{test}}^{(N)}\}$ and the attacker aim to disrupt the predictions as many as possible.

\subsection{Goal of Defense}

In this paper, we focus on language-powered foundation models such as CLIP~\cite{radford2021learning} (Vision-Language model) and GraphCLIP~\cite{zhu2024graphclip} (Graph-Language model). The defender's goal is to provide certified robustness that ensures the classifier's prediction for a test sample remains unchanged under adversarial poisoning attacks on the support set, bounded by a predefined attack budget \( T \). Specifically, $\forall \mathcal{D}_p\in\mathcal{B}(\mathcal{D},T)$, our provable defense is designed to guarantee the robust prediction of few-shot classifier $g(x_{\text{test}};\mathcal{D})=g(x_{\text{test}};\mathcal{D}_p)$.

An ideal provable defense for few-shot classification should provide the following properties:
\begin{itemize}
    \item \textbf{High clean accuracy}: the model's accuracy on test samples when the support set is unperturbed, ensuring effective utilization of both feature embeddings and textual embeddings.
    \item \textbf{High certified accuracy}: the ratio of test samples that are both correctly classified and provably robust against adversarial poisoning attacks within the attack budget \( T \).
\end{itemize}
By achieving these objectives, our defense aims to balance clean performance and robustness, ensuring the reliability and safety of language-empowered foundation models in adversarial settings.

\section{Provably Robust Adaptation: LeFCert}
In this section, we first propose a language-integrated and certifiably robust classifier. Then we analyze the classification bounds and provide the provably robust condition. Furthermore, we introduce two variants that consider a more realistic threat model to enhance the certified performance.

\subsection{Robust Language-Empowered Adapter}

Few-shot classifiers are increasingly used in scenarios where data is scarce. However, the lack of robust adaptation mechanisms leaves them vulnerable to adversarial perturbations, especially in contexts where both feature embeddings and label text embeddings can be leveraged for classification. While recent works have explored advanced few-shot adapters, the development of certifiably robust few-shot adapters remains underexplored.
To address this gap, we propose a robust language-empowered classifier that integrates both feature and textual information, which enables the derivation of provable upper and lower bounds for classification scores.

\subsubsection{Base classifier} Inspired by Tip-Adapter~\cite{zhang2022tip} and LP++~\cite{huang2024lp++}, the key idea is to find a suitable way to properly blend the textual information $\mathbf{f}_{\text{test}}^\top \mathbf{t}_c$ in~\eqref{eqn:language_info} and feature information $\mathbf{f}_{\text{test}}^\top \mathbf{p}_c$ in~\eqref{eqn:feature_info}. In our paper, we employ a training-free but adaptive and flexible class-wise blending parameter $\alpha_c$ based on the rationale that: 
if the support samples in a class are closer to their label text embedding, the textual embedding provides more reliable semantic information and should have a stronger influence. Conversely, if the support samples deviate significantly from their label embedding, their contribution should be reduced. Specifically, we employ the blending coefficient as follows:
\begin{equation}
    \alpha_c:=\frac{\lambda}{K} \sum_{i=1}^n y_{ic} \mathbf{f}_i^\top \mathbf{t}_c, \quad c\in\{1,2,\cdots,C\},
\end{equation}
where $\lambda$ is a hyperparameter that globally controls the contribution of the text knowledge; $\alpha_c$ is an adaptive and class-wise blending parameter;
$\mathbf{f}_i$ is the $i^{th}$ support sample, and $y_{ic}=1$ if $y_i=c$. We note that there are $K$ support samples that belong to class-$c$. To further simplify the nation, let $\mathbf{f}_{ci}$ denotes the $i^{th}$ sample in class $c$, we define our initial training-free integrated classification scores as:
\begin{align}
    S_c &= \mathbf{f}_{\text{test}}^\top \mathbf{p}_c+\alpha_c \mathbf{f}_{\text{test}}^\top \mathbf{t}_c,\\
    &=\sum_{i=1}^K \mathbf{f}_{\text{test}}^\top \mathbf{f}_{ci}+\frac{\lambda}{K} (\sum_{i=1}^K \mathbf{f}_{ci}^\top \mathbf{t}_c) \mathbf{f}_{\text{test}}^\top \mathbf{t}_c.\label{eqn:lp++}
\end{align}
Next, the design of our robust classifier contains two parts: ensuring the robustness of the prototypes $\mathbf{p}_c$, and the robustness of the blending coefficient $\alpha_c$. 

Essentially, these vector multiplications $\mathbf{f}_{\text{test}}^\top \mathbf{f}_{ci}$ and $\mathbf{f}_{ci}^\top \mathbf{t}_c$ in Eq.~\eqref{eqn:lp++} compute cosine similarities. To generalize the few-shot classifier, we use distance measurement to substitute the similarity measurement. For the convenience of analyzing the lower bound and upper bound of the classification scores, 
we ensure that all the distances are nonnegative. In general, we represent the distance between vectors $A$ and $B$ as $d(A,B)$, where $d(A,B)\geq 0, \forall A,B\in \mathbb{R}^d$. We employ the cosine distance $d(A,B)=1-\frac{A^\top B}{||A||_2 ||B||_2}\in[0,2]$. Similarly, we can also use other distances such as $l_2$-norm distance $d(A,B)=||A-B||_2\in[0,\infty]$. Then, we have the more general classification scores:
\begin{equation}
    S_c=-\sum_{i=1}^K d(\mathbf{f}_{\text{test}}, \mathbf{f}_{ci})-\frac{\lambda}{K} (\sum_{i=1}^K d(\mathbf{f}_{ci}, \mathbf{t}_c)) d(\mathbf{f}_{\text{test}}, \mathbf{t}_c).
\end{equation}
This classification score can be regarded as a hybrid prototype-based classifier that adaptively integrates the texture center and feature center.  

\subsubsection{Robust Classifier}
To ensure the robustness of the classification scores, we aim to mitigate the influence of outliers (e.g., adversarial samples). In the sum of distances $\sum_{i=1}^K d(\mathbf{f}_{\text{test}}, \mathbf{f}_{ci})$ and $\sum_{i=1}^K d(\mathbf{f}_{ci}, \mathbf{t}_c)$, we regard the largest and smallest top-$M$ distances as the outliers, and we average the remaining distances. Specifically, for a class $c$,
let $p_1^c\geq p_2^c\geq\cdots\geq p_K^c$ denote the sorted sequence of $d(\mathbf{f}_{\text{test}},\mathbf{f}_{ci})$, $i=1,\cdots,K$, and $q_1^c\geq q_2^c \geq \cdots \geq q_K^c$ denote the sorted sequence of $d(\mathbf{f}_{ci},\mathbf{t}_c)$, $i=1,\cdots,K$. We remove the $M$ largest and $M$ smallest of $p_i^c$ and $q_i^c$, respectively. $M\leq \lfloor (K-1)/2\rfloor$ is a hyperparameter. Then, we formally define the robust classifier $g(x_{\text{test}};\mathcal{D})$ as: 
\begin{align}
\label{eqn:robust_classifier}
\hat{y} &=g(x_{\text{test}};\mathcal{D})= \arg\min_c R^c,\\
R^c &= \sum_{i=M+1}^{K-M} p_i^c+\frac{\lambda}{K-2M} (\sum_{i=M+1}^{K-M} q_i^c) d(\mathbf{f}_{\text{test}},\mathbf{t}_c),\label{eqn:R^c}
\end{align}
where $R^c$ is the robust classification score for class $c$. These scores can be regarded as calculating the distance of the query sample to the twofold trimmed mean prototypes. We further visualize the LeFCert in Figure~\ref{fig:Overview}, and provide an additional example in Figure~\ref{fig:pca} (Appendix~\ref{sec:more_results}) that visualizes real embeddings with Principal Component Analysis (PCA) to show how LeFCert can mitigate the power of poisoning attacks. 

\subsection{Upper and lower bound of $R^c$}
Next, we show that the robust classification score $R^c$ based on twofold trimmed mean has a closed-form upper bound and lower bound, which paves the way for deterministic and efficient certification.
\label{sec:normal_bounds}
\begin{thm}
\label{thm:bounds}
Let $p_1^c\geq p_2^c\geq\cdots\geq p_K^c$ denote the sorted sequence of $d(\mathbf{f}_{\text{test}},\mathbf{f}_{ci})$, $i=1,\cdots,K$, and $q_1^c\geq q_2^c \geq \cdots \geq q_K^c$ denote the sorted sequence of $d(\mathbf{f}_{ci},\mathbf{t}_c)$, $i=1,\cdots,K$. Let $R^c$ be the classification score defined in Eq.~\eqref{eqn:R^c}. Suppose the attacker can arbitrarily modify $T^c$ features of the support samples among $\{\mathbf{f}_{ci}|i=1,\cdots, K\}$. If $T^c\leq M$, we have the upper bound and lower bound of the classification score $R^c$ for a given class $c$ and a given perturbation size $T^c$:
$$\overline{R}^c(T^c)=\sum_{i=M+1+T^c}^{K-M+T^c} p_i^c +\frac{\lambda}{K-2M} (\sum_{i=M+1+T^c}^{K-M+T^c} q_i^c) d(\mathbf{f}_{\text{test}},\mathbf{t}_c).$$
$$\underline{R}^c(T^c)=\sum_{i=M+1-T^c}^{K-M-T^c} p_i^c +\frac{\lambda}{K-2M} (\sum_{i=M+1-T^c}^{K-M-T^c} q_i^c) d(\mathbf{f}_{\text{test}},\mathbf{t}_c).$$
\end{thm}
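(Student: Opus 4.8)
The plan is to fix the class $c$ and an arbitrary poisoned support set that alters exactly the $T^c$ feature vectors among $\mathbf{f}_{c1},\dots,\mathbf{f}_{cK}$, and to show that $R^c$ evaluated on this set is squeezed between the two expressions displayed in the statement. I would first reduce the claim to a statement about order statistics. Since the attacker never touches $x_{\text{test}}$ or the label texts, $d(\mathbf{f}_{\text{test}},\mathbf{t}_c)$ is unchanged by poisoning, so with $\beta^c:=\frac{\lambda}{K-2M}\,d(\mathbf{f}_{\text{test}},\mathbf{t}_c)\ge 0$ (using $\lambda\ge 0$ and $M\le\lfloor(K-1)/2\rfloor$, which makes $K-2M>0$) we may write the poisoned score as $R^c=P+\beta^c Q$, where $P=\sum_{i=M+1}^{K-M}\tilde p_i^c$ is the trimmed sum of the descending-sorted poisoned distances $d(\mathbf{f}_{\text{test}},\tilde{\mathbf{f}}_{ci})$ and $Q=\sum_{i=M+1}^{K-M}\tilde q_i^c$ the trimmed sum of the sorted poisoned distances $d(\tilde{\mathbf{f}}_{ci},\mathbf{t}_c)$. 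The structural observation to record is that, as a multiset, $\{\tilde p_i^c\}$ is obtained from the clean $\{p_i^c\}$ by replacing at most $T^c$ entries (those of the $T^c$ altered samples) with arbitrary nonnegative values, and likewise for $\{\tilde q_i^c\}$ versus $\{q_i^c\}$; nonnegativity is guaranteed by $d(\cdot,\cdot)\ge 0$.

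The technical heart is an order-statistic perturbation lemma: if a length-$K$ multiset is modified in at most $t$ entries, then each order statistic moves by at most $t$ positions, i.e., for the descending sorts $a_1\ge\cdots\ge a_K$ (original) and $\tilde a_1\ge\cdots\ge\tilde a_K$ (modified) one has $a_{j+t}\le\tilde a_j\le a_{j-t}$ for every $j$ with $t+1\le j\le K-t$. I would prove this by a two-sided counting argument. For the upper inequality: the number of modified entries strictly exceeding $a_{j-t}$ is at most (surviving original entries exceeding it) $+$ (newly inserted entries) $\le (j-t-1)+t=j-1<j$, so the $j$-th largest modified entry is $\le a_{j-t}$. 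For the lower inequality: the number of modified entries that are $\ge a_{j+t}$ is at least (original entries $\ge a_{j+t}$) $-$ (deleted entries) $\ge (j+t)-t=j$, so the $j$-th largest modified entry is $\ge a_{j+t}$. Ties cause no trouble, since the counts use only the sets $\{k:a_k>a_{j-t}\}$ and $\{k:a_k\ge a_{j+t}\}$.

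Applying this lemma with $t=T^c$ to $\tilde p^c$ versus $p^c$ and summing the pointwise bounds over $j=M+1,\dots,K-M$ gives $\sum_{i=M+1+T^c}^{K-M+T^c}p_i^c\le P\le\sum_{i=M+1-T^c}^{K-M-T^c}p_i^c$; the hypothesis $T^c\le M$ (together with $M\le\lfloor(K-1)/2\rfloor$) is exactly what keeps both shifted windows $[M+1+T^c,\,K-M+T^c]$ and $[M+1-T^c,\,K-M-T^c]$ nonempty and inside $\{1,\dots,K\}$, so every sum is well defined. The same argument applied to $\tilde q^c$ versus $q^c$ gives $\sum_{i=M+1+T^c}^{K-M+T^c}q_i^c\le Q\le\sum_{i=M+1-T^c}^{K-M-T^c}q_i^c$; the two applications are independent even though the same $T^c$ samples are altered, because each uses only that the corresponding distance multiset changed in at most $T^c$ places. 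Multiplying the $Q$-chain by $\beta^c\ge 0$ and adding it to the $P$-chain yields $\overline{R}^c(T^c)\le R^c\le\underline{R}^c(T^c)$, with $\overline{R}^c(T^c)$ and $\underline{R}^c(T^c)$ exactly the two quantities defined in the statement; since the poisoned set was arbitrary subject to $T^c$ modified class-$c$ samples, this pins the poisoned classification score between the two displayed expressions, as claimed.

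I expect the order-statistic perturbation lemma to be the only real obstacle — the two counting estimates must be carried out carefully (the off-by-one tracking of inserted and deleted entries, and the handling of ties), and one must check that the shifted summation windows remain inside $\{1,\dots,K\}$, which is precisely where $T^c\le M$ and $M\le\lfloor(K-1)/2\rfloor$ are used. The remaining steps — the split $R^c=P+\beta^c Q$, the nonnegativity of $\beta^c$ so that the two inequality chains add without a sign flip, and the fact that $d(\mathbf{f}_{\text{test}},\mathbf{t}_c)$ is fixed under support poisoning — are routine.
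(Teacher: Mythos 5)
Your proof is correct and arrives at the same shifted-window formulas, but by a genuinely different route from the paper's. The paper's argument rests on its appendix lemma characterizing the attacker's \emph{optimal strategy}: to minimize the trimmed sum, replace the $T^c$ largest entries with the smallest admissible value (and symmetrically for maximization), after which the trimmed window is read off as shifted by exactly $T^c$ indices. You instead prove a general order-statistic stability lemma --- modifying at most $t$ entries of a multiset moves every order statistic by at most $t$ positions, $a_{j+t}\le\tilde a_j\le a_{j-t}$ --- via a two-sided counting argument, and then sum it over the trimmed window. Your route establishes validity of the bounds against \emph{every} admissible poisoning directly, without having to argue that any particular attack is optimal (which is all the downstream certification needs); the paper's route additionally yields tightness of the bounds, though its optimality argument is stated more informally than your counting proof. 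Both proofs bound the $p$-sum and the $q$-sum separately and recombine them using the nonnegativity of $\frac{\lambda}{K-2M}\,d(\mathbf{f}_{\text{test}},\mathbf{t}_c)$, exactly as you do, so that part is the same.

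One point you should make explicit rather than pass over. You adopt the descending sort $p_1^c\ge\cdots\ge p_K^c$ exactly as written in the theorem, and under that convention you correctly derive $\overline{R}^c(T^c)\le R^c\le\underline{R}^c(T^c)$ --- that is, the quantity \emph{named} $\overline{R}^c$ comes out as the lower end of the bracket and $\underline{R}^c$ as the upper end. The paper's own appendix lemma sorts in \emph{ascending} order, and the certification condition $\overline{R}^{\hat y}(T^{\hat y})<\min_{c\ne\hat y}\underline{R}^c(T-T^{\hat y})$ is only meaningful if $\overline{R}$ denotes the maximum attainable score of the predicted class and $\underline{R}$ the minimum attainable score of a competitor; so the intended convention is ascending and the ``$\ge$'' signs in the theorem statement are a typo. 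Your mathematics is unaffected --- the two displayed expressions do bracket the poisoned score either way --- but saying only that the score is ``pinned between the two expressions'' leaves ambiguous which formula is the upper bound, and the robustness condition that consumes this theorem depends on that assignment.
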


If $M<T^c\leq K-M-1$, we can employ the interval of cosine distance $d(A,B)=1-\frac{A^\top B}{||A||_2 ||B||_2}\in[0,2]$ to obtain the bounds:
\begin{align}
    \overline{R}^c(T^c)&=(\sum_{i=M+1+T^c}^{K} p_i^c+2(T^c-M) )+\\
    &\frac{\lambda}{K-2M} (\sum_{i=M+1+T^c}^{K} q_i^c+2(T^c-M)) d(\mathbf{f}_{\text{test}},\mathbf{t}_c).\nonumber
\end{align}
$$\underline{R}^c(T^c)=\sum_{i=1}^{K-M-T^c} p_i^c +\frac{\lambda}{K-2M} (\sum_{i=1}^{K-M-T^c} q_i^c) d(\mathbf{f}_{\text{test}},\mathbf{t}_c).$$
If the distance $d(\cdot,\cdot)$ does not have a box constraint, there are no finite bounds, and we can not certify any sample for $M<T^c\leq K-M-1$. When $T^c>K-M-1$, we cannot certify any sample. We provide the proof in Appendix~\ref{sec:proofs}. 

\subsection{Provable robustness via optimization}
With the upper bound and lower bound of the classification score $R^c(T^c)$ for a given class $c$ and its perturbation size $T^c$, finally, we can obtain the provably robust condition. To find the worst-case attacker, we employ a simple optimization problem that optimizes the allocation of perturbation size $T$ into various classes for each query instance. 
\begin{thm}
(Provably Robust Condition). Let $\mathcal{D}$ denote the clean support set for $C$-way $K$-shot few classification. The few-shot classifier $g$ is defined in \eqref{eqn:robust_classifier}. Let $\hat{y}=g(x_{\text{test}};\mathcal{D})$ represent the prediction for testing input $\text{test}$ with $\mathcal{D}$ as the support set.
We have a provably robust classification result:
\begin{align}
    g(x_{\text{test}};\mathcal{D})=g(x_{\text{test}};\mathcal{D}_p),\, \forall \mathcal{D}_p\in \mathcal{B}(\mathcal{D},T),
\end{align}
if:
\begin{align}
\label{eqn:certify_condition}
\overline{R}^{\hat{y}}(T^{\hat{y}})<\min_{c\neq \hat{y}} \underline{R}^c(T-T^{\hat{y}}), \forall T^{\hat{y}}: 0\leq T^{\hat{y}}\leq T.
\end{align}

\end{thm}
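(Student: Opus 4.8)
The plan is to combine Theorem~\ref{thm:bounds} with a worst-case argument over how the attacker distributes its budget $T$ among the $C$ classes. First I would fix an arbitrary poisoned support set $\mathcal{D}_p \in \mathcal{B}(\mathcal{D},T)$ and let $T^c$ be the number of poisoned samples in class $c$, so that $\sum_{c} T^c = T$ (some $T^c$ may be zero, and the budget need not be fully spent, which only helps). Denote by $\widetilde{R}^c$ the classification score $R^c$ computed on $\mathcal{D}_p$. The prediction on $\mathcal{D}_p$ is $\arg\min_c \widetilde{R}^c$, so to show the prediction is unchanged from $\hat{y}$ it suffices to prove $\widetilde{R}^{\hat{y}} < \widetilde{R}^{c}$ for every $c \neq \hat{y}$.

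The key step is to sandwich each perturbed score by the bounds of Theorem~\ref{thm:bounds}. Since poisoning only the $T^c$ support samples of class $c$ (note the $\lambda$-term's factor $d(\mathbf{f}_{\text{test}},\mathbf{t}_c)$ is unaffected by poisoning of the \emph{feature} embeddings, and the $t_c$ distances $q_i^c$ are likewise perturbed by at most $T^c$ entries), Theorem~\ref{thm:bounds} gives $\underline{R}^c(T^c) \le \widetilde{R}^c \le \overline{R}^c(T^c)$ for each $c$. In particular $\widetilde{R}^{\hat{y}} \le \overline{R}^{\hat{y}}(T^{\hat{y}})$ and $\widetilde{R}^c \ge \underline{R}^c(T^c)$ for $c \neq \hat{y}$. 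Because $\sum_{c \neq \hat{y}} T^c = T - T^{\hat{y}}$ and each $\underline{R}^c$ is nonincreasing in its argument (removing lower distances can only decrease the trimmed mean — this monotonicity I would state as a one-line observation, since the sums defining $\underline{R}^c(\cdot)$ slide toward smaller sorted values), we have $\underline{R}^c(T^c) \ge \underline{R}^c(T - T^{\hat{y}})$ for each such $c$. Chaining these inequalities yields
\begin{align}
\widetilde{R}^{\hat{y}} \le \overline{R}^{\hat{y}}(T^{\hat{y}}) < \min_{c\neq\hat{y}} \underline{R}^c(T-T^{\hat{y}}) \le \min_{c\neq\hat{y}}\underline{R}^c(T^c) \le \min_{c\neq\hat{y}} \widetilde{R}^c,
\end{align}
where the strict middle inequality is exactly the hypothesis~\eqref{eqn:certify_condition}, applied at the particular value $T^{\hat{y}}$ realized by $\mathcal{D}_p$. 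Hence $\arg\min_c \widetilde{R}^c = \hat{y}$, i.e. $g(x_{\text{test}};\mathcal{D}_p) = \hat{y} = g(x_{\text{test}};\mathcal{D})$. Since $\mathcal{D}_p$ was arbitrary and the quantifier in~\eqref{eqn:certify_condition} ranges over all admissible $T^{\hat{y}}$, the conclusion follows for all of $\mathcal{B}(\mathcal{D},T)$.

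The main obstacle I anticipate is the bookkeeping around which terms of $R^c$ actually move under poisoning, and in particular handling the regime $T^c > M$ where the ``clean-bound'' form of Theorem~\ref{thm:bounds} no longer applies and one must instead invoke the box-constrained bounds (or declare non-certifiable when $T^c > K-M-1$); I would restrict the quantifier in~\eqref{eqn:certify_condition} implicitly to those $T^{\hat{y}}$ and budget splits for which finite bounds exist, and note that if any admissible split forces a class into the uncertifiable regime then~\eqref{eqn:certify_condition} simply cannot be satisfied, so the statement is vacuous there. A secondary subtlety is making the monotonicity of $\underline{R}^c(\cdot)$ precise — it requires that the trimmed-mean window $\{M+1-t,\dots,K-M-t\}$ shifting with $t$ always selects a pointwise-smaller multiset of the sorted distances, together with nonnegativity of the $q_i^c$ and of $d(\mathbf{f}_{\text{test}},\mathbf{t}_c)$ so the $\lambda$-term is also monotone; this is routine but worth one explicit sentence.
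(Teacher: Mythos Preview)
Your proposal is correct and follows essentially the same approach as the paper: bound the perturbed score of $\hat{y}$ above and the perturbed scores of competitors below via Theorem~\ref{thm:bounds}, then invoke the hypothesis at the realized value of $T^{\hat{y}}$. Your version is in fact more careful than the paper's sketch, since you make explicit the monotonicity step $\underline{R}^c(T^c)\ge \underline{R}^c(T-T^{\hat{y}})$ needed to pass from the actual per-class budgets to the uniform bound $T-T^{\hat{y}}$, which the paper leaves implicit.
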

We provide the proof in Appendix~\ref{sec:proofs}. We termed our proposed language-empowered few-shot certification model as \textbf{LeFCert}, and we further summarize it in Algorithm~\ref{alg:lefcert}.

\begin{algorithm}[!ht]
\caption{LeFCert}
\label{alg:lefcert}
\begin{algorithmic}[1]
\REQUIRE $C$-way $K$-shot support set $\mathcal{D}$, test input $x_{\text{test}}$, trimming parameter $M$, attack budget $T$, hyperparameter $\lambda$.
\ENSURE Predicted class $\hat{y}$ and certified robustness status.

\STATE \textbf{Step 1: Encode the features and text.}
\FOR{each classes $c \in \{1, 2, \ldots, C\}$}
    \STATE $\mathbf{f}_{ci} \gets \mathcal{F}_{enc}(x_{ci}),\quad i = 1, \ldots, K$\\
    $\mathbf{t}_c \gets \mathcal{T}_{enc}(\text{`A photo of a $\{y_c\}$'})$
\ENDFOR
\STATE $\mathbf{f}_{\text{test}} \gets \mathcal{F}_{enc}(x_{\text{test}})$
\STATE \textbf{Step 2: Compute and sort distance scores.}
\FOR{each class $c$}
    \STATE $p_i^c \gets d(\mathbf{f}_{\text{test}}, \mathbf{f}_{ci}), \quad i = 1, \ldots, K$
    \STATE $q_i^c \gets d(\mathbf{f}_{ci}, \mathbf{t}_c), \quad i = 1, \ldots, K$
    \STATE 
    $\mathbf{p}^c \gets \text{sort} (\mathbf{p}^c)$,
    $\mathbf{q}^c \gets \text{sort} (\mathbf{q}^c)$
\ENDFOR
\STATE \textbf{Step 3: Integrated robust classification score.}
\FOR{each class $c$}
    \STATE Compute the twofold trimmed robust classification score $R^c$ according to Eq.~\eqref{eqn:R^c}.
\ENDFOR
\STATE \textbf{Step 4: Upper and Lower bounds analysis.}
\FOR{each class $c$ and perturbation size $1\leq T^c\leq T$}
    \STATE Compute upper bound $\overline{R}^c(T^c)$ and lower bound $\underline{R}^c(T^c)$ according to Section~\ref{sec:normal_bounds}.
\ENDFOR
\STATE \textbf{Step 5: Robust prediction and certified robustness.}
\STATE Predict the class: $\hat{y} \gets \arg\min_c R^c$.
\STATE Verify certified robustness condition in \eqref{eqn:certify_condition}:
\STATE $Certified\gets \text{True}$
\FOR{For $T^{\hat{y}}\in\{0,\cdots,T\}$}
\FOR{For each class $c\neq \hat{y}$}
\IF{$\overline{R}^{\hat{y}}(T^{\hat{y}})>\underline{R}^c(T - T^{\hat{y}})$}
\STATE $Certified\gets \text{False}$
\ENDIF
\ENDFOR
\ENDFOR
\RETURN Predicted class $\hat{y}$ and its status $Certified$.
\end{algorithmic}
\end{algorithm}

\section{Extensions: Enhancing Certified Robustness}

In this section, we introduce two extensions of our model designed to enhance certified robustness under more realistic threat models and practical settings. These extensions, \textbf{LeFCert-L} and \textbf{LeFCert-C}, address different challenges: the first focuses on constraining adversarial perturbations within an $l_2$-norm ball, while the second extends certification to collective adversarial scenarios.

\subsection{Variant-1: Dual-constraint Certification}

Existing certifications~\cite{levine2021deep,jia2022certified,wang2024fcert}, including our basic version of LeFCert, are limited to handling a single-level budget constraint. This allows the attacker to arbitrarily modify the chosen $T$ poisoned samples without considering imperceptibility in the certification process.

In this variant, we consider a more realistic threat model with \textit{dual constraint}, where the adversary is allowed to poison $T$ samples but is also restricted to perturbations within an $l_2$-norm ball, ensuring the perturbations remain imperceptible. 
Formally, the adversary is constrained as follows:
\begin{equation}
    \forall x'_i \in \mathcal{D}_p, ||x_i-x'_i||_2\leq r,
\end{equation}
where $x'_i$ is the perturbed sample, and $r$ is predefined radius. 

Directly associating the input-level $l_2$ constraints with the constraints of feature embedding is challenging due to the black-box nature and complexity of deep learning encoders. To tackle the challenge, we employ randomized smoothing to create a smoothed encoder with Lipschitz continuity. 

Randomized smoothing~\cite{cohen2019certified} transforms a base classifier $f(x)$ to a smoothed classifier $f_s(x)$ by adding isotropic Gaussian noise to the input: $\epsilon \sim \mathcal{N}(x, \sigma^2 I)$. That is, given a classifier $f: \mathbb{R}^D \rightarrow [0, 1]$ and smoothing distribution $\mathcal{N}(0, \sigma^2 I)$, the classifier $g$ is defined as follows:

\begin{equation}
    f_s(x) = \frac{1}{(2\pi\sigma^2)^{n/2}} \int_{\mathbb{R}^D} f(x + \epsilon) \exp\left(-\frac{\|\epsilon\|_2^2}{2\sigma^2}\right) d\epsilon.
    \tag{5}
\end{equation}
Building on this, Pautov et al.~\cite{pautov2022smoothed} extended randomized smoothing to certify the embedding function and derive its Lipschitz continuity:
\begin{thm}
(Pautov et al.\cite{pautov2022smoothed}) Suppose that $f: \mathbb{R}^D \to \mathbb{R}^d$ is a deterministic embedding function and its smoothed function with Gaussian noise 
\begin{equation}
    f_s(x) = \mathbb{E}_{\epsilon \sim \mathcal{N}(0, \sigma^2 I)} f(x + \epsilon)
\end{equation}
is continuously differentiable for all $x$. If for all $x$, $\|f(x)\|_2 = 1$, then $f_s(x)$ is $L$--Lipschitz in $l_2$--norm:
\begin{equation}
    \forall x, x' \in \mathbb{R}^D, \|f_s(x) - f_s(x')\|_2 \leq L \|x - x'\|_2,
\end{equation}
where $L = \sqrt{\frac{2}{\pi \sigma^2}}$.
\end{thm}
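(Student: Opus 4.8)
The plan is to reduce the Lipschitz bound to a uniform estimate on the operator norm of the Jacobian of $f_s$, and to obtain that estimate from a Gaussian integration-by-parts (Stein) identity together with the unit-norm constraint $\|f(x)\|_2 = 1$.

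First I would express the smoothed map as a convolution, $f_s(x) = \int_{\mathbb{R}^D} f(u)\,\phi_\sigma(u - x)\,du$, where $\phi_\sigma(z) = (2\pi\sigma^2)^{-D/2}\exp(-\|z\|_2^2 / (2\sigma^2))$. Since $\nabla_x \phi_\sigma(u - x) = \sigma^{-2}(u - x)\,\phi_\sigma(u - x)$, differentiating under the integral sign — justified by the hypothesis that $f_s$ is continuously differentiable together with a dominated-convergence argument using $\|f\|_2 \equiv 1$ and $\int_{\mathbb{R}^D} \|z\|_2\,\phi_\sigma(z)\,dz < \infty$ — gives, componentwise, $\nabla f_{s,j}(x) = \sigma^{-2}\,\mathbb{E}_{\epsilon \sim \mathcal{N}(0,\sigma^2 I)}\big[f_j(x + \epsilon)\,\epsilon\big]$, equivalently $J f_s(x) = \sigma^{-2}\,\mathbb{E}_\epsilon\big[f(x + \epsilon)\,\epsilon^\top\big] \in \mathbb{R}^{d \times D}$. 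This is the workhorse identity of the argument.

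Next I would bound $\|J f_s(x)\|_{2 \to 2} = \sup_{\|w\|_2 = \|v\|_2 = 1} w^\top J f_s(x)\, v$. Fixing unit vectors $w \in \mathbb{R}^d$ and $v \in \mathbb{R}^D$, we have $w^\top J f_s(x)\, v = \sigma^{-2}\,\mathbb{E}_\epsilon\big[(w^\top f(x + \epsilon))(v^\top \epsilon)\big]$, so by the triangle inequality and Hölder, $|w^\top J f_s(x)\, v| \le \sigma^{-2}\,\mathbb{E}_\epsilon\big[\,|w^\top f(x + \epsilon)|\,|v^\top \epsilon|\,\big] \le \sigma^{-2}\,\mathbb{E}_\epsilon|v^\top \epsilon|$, where the last inequality uses $|w^\top f(x + \epsilon)| \le \|w\|_2\,\|f(x + \epsilon)\|_2 = 1$. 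Since $\|v\|_2 = 1$, $v^\top \epsilon \sim \mathcal{N}(0, \sigma^2)$, and the first absolute moment of a centered Gaussian is $\mathbb{E}|v^\top\epsilon| = \sigma\sqrt{2/\pi}$; hence $|w^\top J f_s(x)\, v| \le \sigma^{-2}\cdot\sigma\sqrt{2/\pi} = \sqrt{2/(\pi\sigma^2)} = L$. Taking the supremum over $w$ and $v$ yields $\|J f_s(x)\|_{2 \to 2} \le L$ for every $x$. The proof then closes with the mean-value inequality: for any $x, x'$, $f_s(x') - f_s(x) = \int_0^1 J f_s\big(x + t(x' - x)\big)(x' - x)\,dt$, and bounding the integrand by $L\,\|x' - x\|_2$ gives $\|f_s(x') - f_s(x)\|_2 \le L\,\|x' - x\|_2$.

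I expect the main obstacle to lie in deriving the Jacobian identity and the sharp constant, in two respects. The technical delicacy is the rigorous justification of differentiation under the integral in the first step, which is where the $C^1$ assumption on $f_s$ is really consumed (note $f$ itself is assumed only measurable with unit norm, not differentiable). The conceptual crux is the second step: a naive Cauchy–Schwarz bound, $\|J f_s(x)\|_{2 \to 2} \le \sigma^{-2}\sqrt{\mathbb{E}\|f\|_2^2}\,\sqrt{\mathbb{E}(v^\top\epsilon)^2} = 1/\sigma$, is loose, and recovering the optimal constant $\sqrt{2/(\pi\sigma^2)}$ requires the asymmetric pairing of the \emph{pointwise} bound $|w^\top f| \le 1$ with the \emph{first} absolute moment $\mathbb{E}|v^\top\epsilon|$ rather than the second — exploiting that $\|f\|_2 = 1$ holds everywhere, not merely in an averaged sense.
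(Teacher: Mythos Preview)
Your argument is correct and is essentially the standard route to this result. Note, however, that the paper does not supply its own proof of this theorem: it is stated with attribution to Pautov et al.\ and used as a black box, so there is no in-paper proof to compare against. Your derivation --- Stein-type differentiation of the Gaussian convolution to obtain $Jf_s(x)=\sigma^{-2}\mathbb{E}[f(x+\epsilon)\epsilon^\top]$, then pairing the pointwise bound $|w^\top f|\le 1$ with the first absolute moment $\mathbb{E}|v^\top\epsilon|=\sigma\sqrt{2/\pi}$ --- matches the argument in the cited source and recovers the sharp constant; your remark about why Cauchy--Schwarz would lose the factor $\sqrt{2/\pi}$ is also to the point.
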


Using the feature encoder $\mathcal{F}_{enc}$ in the foundation model as the base embedding function, we can create our smoothed encoder model $\mathcal{F}_s(x)$:
\begin{equation}
    \mathcal{F}_s(x) = \mathbb{E}_{\epsilon \sim \mathcal{N}(0, \sigma^2 I)} \mathcal{F}_{enc}(x + \epsilon),
\end{equation}
where the $\mathcal{F}_{enc}$ is a vision feature encoder or graph feature encoder. To ensure that for all $x$, $\|\mathcal{F}_{enc}(x)\|_2 = 1$, we apply normalization to the output embedding. Then, $\forall x, x' \in \mathbb{R}^D$ we have the constrain of the smoothed feature embedding:
\begin{equation}
    \|\mathcal{F}_s(x) - \mathcal{F}_s(x')\|_2 \leq L \|x - x'\|_2\leq Lr.
\end{equation}


With the Lipschitz continuity of the smoothed encoder, we can refine the upper and lower bounds of the robust classification scores $R^c$ when using $l_2$-norm as the distance measurement. 
To obtain the upper bound of $R^c$ under the Lipschitz continuity $\overline{R}_L^c(T^c)$, the problem becomes that the attacker needs to select the optimal $T^c$ index of $K$ support samples, and enlarge $\{p_i^c\}$ and $\{q_i\}$ by $Lr$, for $i\in\mathcal{I}$, where $\mathcal{I}$ denotes the selected index, $p_i$ represents the distance $d(\mathbf{f}_{\text{test}},\mathbf{f}_{ci})$, and $q_i$ represents $d(\mathbf{f}_{ci},\mathbf{t}_{c})$.
We can obtain the bounds for $R^c$ by the traversal method. The traversal approach is further detailed in Algorithm~\ref{alg:taversal}. The complexity $O(C(K, T^c))$ is reasonable for small $T^c$ and 
$K$. We termed this variant of our model as \textbf{LeFCert-L}.


\begin{algorithm}[!ht]
\caption{Traversal Bound (for LeFCert-L)}
\label{alg:taversal}
\begin{algorithmic}[1]
\REQUIRE Unsorted distance lists $\mathbf{p},\mathbf{q} \in \mathbb{R}^K$ for $c$-class support set, trimming parameter $M$, attack budget $T^c$, modification range $Lr$.
\ENSURE Optimal upper bound $\overline{R}_L^c(T^c) \in \mathbb{R}$.
\STATE Initialize $\overline{R}_L^c(T^c) \gets -\infty$
\STATE Generate all combinations of $T^c$ indices from $\{1, 2, \dots, K\}$: $\mathcal{I} \gets \text{combinations}(K, T^c)$
\FOR{each index set $\mathbf{i} \in \mathcal{I}$}
    \STATE Clone $\mathbf{p}$ to $\mathbf{p}^{\text{mod}}$, and  $\mathbf{q}$ to $\mathbf{q}^{\text{mod}}$.
    \FOR{$k \in \mathbf{i}$}
        \STATE $\mathbf{p}^{\text{mod}}[k] \gets \mathbf{p}^{\text{mod}}[k] + Lr$
        \STATE $\mathbf{q}^{\text{mod}}[k] \gets \mathbf{q}^{\text{mod}}[k] + Lr$
    \ENDFOR
    \STATE Resort: $\mathbf{p}^{\text{mod}} \gets \text{sort}(\mathbf{p}^{\text{mod}})$, $\mathbf{q}^{\text{mod}} \gets \text{sort}(\mathbf{q}^{\text{mod}})$
    \STATE $\mathbf{p}_t \gets \mathbf{p}^{\text{mod}}[M:K-M]$
    \STATE $\mathbf{q}_t \gets \mathbf{q}^{\text{mod}}[M:K-M]$
    \STATE $R^{mod}\gets\text{sum}(\mathbf{p}_t)+\lambda\text{mean}(\mathbf{q}_t)d(\mathbf{f}_{\text{test}},\mathbf{t}_c)$
    \STATE $\overline{R}_L^c(T^c) \gets \max(R^{mod},\overline{R}_L^c(T^c))$
\ENDFOR
\RETURN $\overline{R}_L^c(T^c)$
\end{algorithmic}
\end{algorithm}

Nevertheless, the random Gaussian noise led to a drop in accuracy (Table~\ref{tab:cert_images}). We further employ Diffusion Denoise Smoothing~\cite{carlini2023certified}, applying off-the-shelf diffusion models to denoise the input, and we termed this variant of our model as \textbf{LeFCert-LD}. Specifically, the denoised and smoothed encoder is defined as: 
\begin{equation}
    \mathcal{F}_{ds}(x) = \mathbb{E}_{\epsilon \sim \mathcal{N}(0, \sigma^2 I)} \mathcal{F}_{enc}(denoise(x_t;t)),
\end{equation}
where $t$ is the time-step in the diffusion model such that $\sigma^2=\frac{1-\alpha_t}{\alpha_t}$, the $\alpha_t$ is scheduled by the diffusion model that determines the intensity of Gaussian noise to be added, and $x_t=\sqrt{\alpha_t} (x+\epsilon)$.

\subsection{Variant-2: Collective Certification}
\begin{figure}[!h]
    \centering
    \includegraphics[width=1\linewidth]{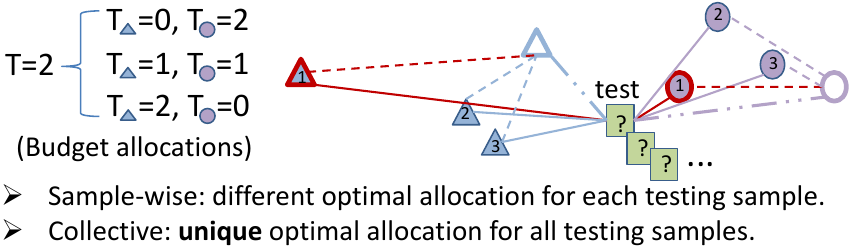}
    \caption{Illustration of collective certification (LeFCert-C).}
    \label{fig:collective}
\end{figure}

Existing certification frameworks~\cite{wang2024fcert,levine2021deep} and the current LeFCert assume a \textit{sample-wise certification} approach, where each testing sample is certified independently, which inherently makes the assumption that the attacker can consume all budget $T$ to disrupt the prediction of each test sample. However, in reality, the attacker can only allocate $T$ poisoned samples to disrupt a set of samples. It means that the adversary needs to allocate the poisoning budget $T$ across the support set to maximize the number of misclassified test samples. We note that sample-wise and collective certification are under the same threat model. However, because sample-wise certification cannot model this realistic attack budget, it has to relax the constraints (illustrated in Figure~\ref{fig:collective}). 

To address this limitation, we propose \textbf{LeFCert-C} (collective certification), which certifies the robustness of multiple testing samples simultaneously by analyzing the worst-case allocation of the poisoning budget. Comparing the sample-wise approach, the collective certificate can provide tighter bounds for the certified accuracy regarding a set of testing instances.

\begin{algorithm}[ht]
\caption{LeFCert-C: Collective Certification}
\label{alg:lefcert_c}
\begin{algorithmic}[1]
\REQUIRE Testing set $\mathcal{X}_{\text{test}} = \{x_{\text{test}}^{(1)}, \ldots, x_{\text{test}}^{(N)}\}$, support set $\mathcal{D}$, poisoning budget $T$, number of classes $C$, robust classification bounds $\underline{R}^c(T^c)$ and $\overline{R}^c(T^c)$, for $c\in\{1,\cdots,C\}$, and $T^c\in\{1,\cdots,T\}$.
\ENSURE Maximum number of misclassified samples $B$, optimal budget allocation $\{T^c\}$.
\STATE \textbf{Step 1: Enumerate All Budget Allocations.}
\STATE Generate all possible allocations $\{T^c\}$ of the poisoning budget $T$ across $C$ classes, satisfying:
$\sum_{c=1}^C T^c \leq T.$
\STATE \textbf{Step 2: Evaluate Non-certified Number.}
\STATE Initialize $B_{\text{max}} \gets 0$ and $\{T^c_{\text{opt}}\} \gets \emptyset$.
\FOR{each budget allocation $\{T^c\}$}
    \STATE Initialize $b_i \gets 0$ for all $i \in \{1, \ldots, N\}$.
    \FOR{each test sample $x_{\text{test}}^{(i)}$}
        \IF{$\overline{R}^{\hat{y}}(T^{\hat{y}})>\min_{c\neq \hat{y}} \underline{R}^c(T-T^{\hat{y}})$}
            \STATE Set $b_i \gets 1$.
        \ENDIF
    \ENDFOR
    \STATE Compute the sum: $B \gets \sum_{i=1}^N b_i$.
    \IF{$B > B_{\text{max}}$}
        \STATE Update $B_{\text{max}} \gets B$ and $\{T^c_{\text{opt}}\} \gets \{T^c\}$.
    \ENDIF
\ENDFOR

\STATE \textbf{Step 3: Return Optimal Results.}
\RETURN $B_{\text{max}}$, $\{T^c_{\text{opt}}\}$.
\end{algorithmic}
\end{algorithm}

The key distinction in LeFCert-C lies in the optimization of the attacker's budget allocation. Instead of certifying each test sample in isolation, LeFCert-C evaluates the worst-case distribution of the poisoning budget $T$ across classes and support samples $\mathcal{D}$, aiming to maximize the number of misclassified (not certified) test samples $\mathcal{X}_{\text{test}} = \{x_{\text{test}}^{(1)}, x_{\text{test}}^{(2)}, \ldots, x_{\text{test}}^{(N)}\}$. This leads to a combinatorial optimization problem, where the attacker seeks the optimal allocation of $T$ poisoned samples across classes and their corresponding modifications to $\mathbf{f}_{ci}'$, the feature embeddings of poisoned support samples.

Formally, the attacker’s objective is to maximize the number of misclassified test samples:
\begin{align}
    \max_{\{T^1,T^2,\cdots.T^C\}} B:=\sum_{i=1}^N b_i,
\end{align}
where $b_i = 1$ if the robust classification condition~\eqref{eqn:certify_condition} fails for the $i$-th test sample:
$\overline{R}^{\hat{y}}(T^{\hat{y}})>\min_{c\neq \hat{y}} \underline{R}^c(T-T^{\hat{y}})$,
and $b_i = 0$ otherwise. Here, $T^c$ represents the number of poisoned samples allocated to class $c$, satisfying $\sum_{c=1}^C T^c \leq T$, and $\overline{R}$ and $\underline{R}$ denote the lower and upper bounds of the robust classification scores. This optimization problem is to find the worst-case budget allocation, and we propose a traversal algorithm to solve it, which is detailed in Algorithm~\ref{alg:lefcert_c}. 

LeFCert-C certifies the collective robustness by ensuring that, even under the worst-case allocation of $T$, the predictions of at least $N-B_{max}$ test samples remain robust, where $B_{max}$ is the maximum number of samples that can be misclassified in the worst-case. 

LeFCert-C extends the applicability of LeFCert in modeling a collective budget constraint, providing a stronger and tighter certification framework. By jointly considering multiple test samples and optimizing over budget allocations, LeFCert-C ensures a higher certifiably robust ratio (Table~\ref{tab:cert_collect}).

\section{Experiments}

In this section, we comprehensively evaluate the effectiveness of LeFCert and its variants by addressing the following research questions:

\begin{itemize}
    \item \textbf{Q1 [Certified Accuracy]:} How effective is LeFCert and its variants in achieving certified robustness compared to state-of-the-art baselines?
    \item \textbf{Q2 [Clean Accuracy]:} How well does LeFCert balance the trade-off between clean and certified accuracy?
    \item \textbf{Q3 [Efficiency]:} What is the computational efficiency of LeFCert and its variants, and are they practical for few-shot prediction?
    \item \textbf{Q4 [Parameter Analysis]:} How does each parameter or component influence the performance of LeFCert?
\end{itemize}

\subsection{Experiment settings}

\subsubsection{Datasets}

In this work, we evaluate our few-shot certification framework on standard benchmark datasets covering both vision and graph domains.

\begin{paragraphs}{Image Datasets:}
Our evaluation employs three standard few-shot learning benchmarks with class-disjoint partitions: 
(1) CIFAR-FS~\cite{bertinetto2018metalearning}, derived from CIFAR-100 with 64 training, 16 validation, and 20 test classes; (2) tieredImageNet~\cite{bertinetto2018metalearning}, a hierarchical subset of ImageNet comprising 351/97/160 classes across splits; and (3) CUB-200-2011~\cite{triantafillou2019meta}, featuring fine-grained bird classification with 140/30/30 splits. In our paper, we use the testing splits only. 
\end{paragraphs}

\begin{paragraphs}{Graph Datasets:}
We evaluate our approach on two standard citation networks: Cora~\cite{sen2008collective} and CiteSeer~\cite{giles1998citeseer}, where each publication is represented as a node. The network connects publications through citation edges while preserving content features derived from titles and abstracts.
\end{paragraphs}

\begin{table}[!h]
\centering
\setlength{\tabcolsep}{3pt}
\caption{Statistics of Image Datasets.}
\label{tab:citation_nets}
\begin{tabular}{lccc}
\toprule
Dataset & \#Images & \#Classes  & Size \\
\midrule
CIFAR-FS~\cite{bertinetto2018metalearning} & 60,000 &  100 &  32$\times$32  \\
Tiered-ImageNet~\cite{ren2019incremental} & 779,165 & 608 & 224$\times$224  \\
CUB200-2011~\cite{triantafillou2019meta} & 11,745 & 200 & 224$\times$224 \\
\bottomrule
\end{tabular}
\end{table}

\begin{table}[!h]
\centering
\setlength{\tabcolsep}{8.3pt}
\caption{Statistics of Graph Datasets.}
\label{tab:citation_nets}
\begin{tabular}{lccc}
\toprule
Dataset & \#Nodes & \#Edges & \#Classes \\
\midrule
Cora~\cite{sen2008collective} & 2,708 & 5,429 & 7 \\
CiteSeer~\cite{giles1998citeseer} & 3,186 & 4,277 & 6  \\
\bottomrule
\end{tabular}
\end{table}

\subsubsection{Baseline Certification Models}

We compare our approach with state-of-the-art deterministic and efficient certified defenses that are suitable for few-shot scenarios, including DPA~\cite{levine2021deep}, KNN~\cite{jia2022certified}, and Fcert~\cite{wang2024fcert}. The baselines are detailed in Section~\ref{sec:certified_baselines}. Note that, when $\lambda=0$, our LeFCert degrades to FCert. 

\subsubsection{Evaluation Metrics}
Following prior works~\cite{cohen2019certified,lecuyer2019certified}, we evaluate \textit{certified accuracy} as the performance of provable defense. Specifically, it quantifies the percentage of query inputs that are both correctly classified and certifiably robust under an arbitrary poisoning attack within the budget $T$. Certified accuracy is defined as:
\begin{equation}
     \frac{\sum_{i=1}^N \mathbb{I}\{\hat{y}_{test}^{(i)}=y_{test}^{(i)}\}\cdot \mathbb{I}\{\hat{y}_{test}^{(i)} \text{ is certified}\}}{N},
\end{equation}
where $\hat{y}_{test}^{(i)}=g(x_{test}^{(i)};\mathcal{D}^{(i)})$, $\mathcal{D}^{(i)}$ is the support set corresponding to the testing input $x_{test}^{(i)}$, and $y_{test}^{(i)}$ is the label. 

\subsubsection{Few-shot Evaluation Protocols}
\
\newline
\begin{paragraphs}{Default Protocol:}
 Following~\cite{wang2024fcert}, we randomly sample $C$ novel classes from the test set, with $K$ support examples per class. From the remaining examples, we sample one query instance per class to form each episode. This process repeats for $10$ independent episodes (with resampled classes and instances each time) to compute robust certification metrics. The final certified accuracy averages results across all episodes and classes, ensuring comprehensive evaluation of few-shot robustness. We set $C=5, K=10$ by default. 
 \end{paragraphs}
 
 \begin{paragraphs}{Collective Protocol:} In practice, the attacker aims to disrupt the prediction of testing samples as many as possible using a shared budget. To evaluate the effectiveness of collective certification, we evaluate one episode, but sample $100$ query instances per class. Collective certified robustness guarantees the ratio of testing inputs that are certifiably robust under a shared budget $T$.
 \end{paragraphs}

\subsubsection{Model and Parameter  Settings}
\
\newline
\begin{paragraphs}{Image classification:} We consider  CLIP~\cite{radford2021learning} as the language-empowered foundation model. 
By default, we employ cosine distance for LeFCert and $\lambda=25$. 
For the Lipschitz continuity enhanced variants LeFCert-L and LeFCert-LD, we employ $l_2$-norm distance and $\lambda=0.4$. We set $r=0.1$, and $\sigma=1.0$. To obtain the smoothed embedding, we sample 1,000 Gaussian noise to average the feature embeddings. Following~\cite{carlini2023certified}, we employ an off-the-shelf diffusion model. For both CUB-200-2011 and tieredImageNet, we directly apply the pre-trained $256\times256$ class-unconditional diffusion model from Guided Diffusion~\cite{dhariwal2021diffusion}, which was originally trained on ImageNet. Because there is no well-trained diffusion model for CIFAR-FS, we train an unconditional diffusion model on CIFAR-FS using the exact settings of Improved Diffusion~\cite{nichol2021improved} with a $32\times32$ resolution model with $4,000$-step cosine noise schedule as they used for CIFAR10. 
\end{paragraphs}

\begin{paragraphs}{Node classification:} 
We employ GraphCLIP~\cite{zhu2024graphclip} as our language-empowered foundation model. By default, we employ cosine distance for LeFCert and $\lambda=0.7$. For each node, we take the $3$-hops ego-net as its input. For structural context extraction, we implement localized subgraph sampling via 32-step random walks centered on target nodes. Consistent with Y. Zhu~\cite{zhu2024graphclip}'s configurations, our architecture comprises $12$ transformer layers with $384$-dimensional hidden states and utilizes the tiny sentence-transformers variant (all-MiniLM-L6-v2) for efficient text encoding.
\end{paragraphs}

\subsection{Experiment Results}
\begin{table}[!h]
\centering
\setlength{\tabcolsep}{2pt}
\caption{Certified accuracy comparison on images.}
\label{tab:cert_images}
\begin{tabular}{clrrrrrr}
\hline
\multicolumn{2}{c}{5-way, 10-shot} & \multicolumn{1}{c}{Clean} & \multicolumn{5}{c}{Certified Accuracy ($T$)} \\ \hline
Datasets & \multicolumn{1}{l}{Models} & \multicolumn{1}{r}{0} & \multicolumn{1}{r}{1} & \multicolumn{1}{r}{3} & \multicolumn{1}{r}{5} & \multicolumn{1}{r}{7} & \multicolumn{1}{r}{9} \\ \hline
\multirow{6}{*}{CIFAR-FS} & KNN & 0.80 & 0.76 & 0.48 & 0.00 & 0.00 & 0.00 \\
 & DPA & 0.88 & 0.86 & 0.60 & 0.00 & 0.00 & 0.00 \\
 & FCert & 0.88 & 0.84 & 0.72 & 0.00 & 0.00 & 0.00 \\ \cdashline{2-8}[1pt/1pt]
 & \textbf{LeFCert} & \textbf{0.98} & \textbf{0.98} & \textbf{0.96} & 0.00 & 0.00 & 0.00 \\
 & \textbf{LeFCert-L} & 0.76 & 0.70 & 0.44 & 0.06 & 0.00 & 0.00 \\
 & \textbf{LeFCert-LD} & 0.68 & 0.52 & 0.38 & \textbf{0.24} & \textbf{0.06} & \textbf{0.02} \\ \hline
\multirow{6}{*}{\begin{tabular}[c]{@{}c@{}}Tiered-\\ ImageNet\end{tabular}} & KNN & 0.82 & 0.76 & 0.56 & 0.00 & 0.00 & 0.00 \\
 & DPA & 0.88 & 0.82 & 0.66 & 0.00 & 0.00 & 0.00 \\
 & FCert & 0.92 & 0.88 & 0.70 & 0.00 & 0.00 & 0.00 \\ \cdashline{2-8}[1pt/1pt] 
 & \textbf{LeFCert} & \textbf{0.98} & \textbf{0.96} & \textbf{0.92} & 0.00 & 0.00 & 0.00 \\
 & \textbf{LeFCert-L} & 0.84 & 0.66 & 0.42 & 0.20 & 0.08 & 0.04 \\
 & \textbf{LeFCert-LD} & 0.96 & 0.94 & 0.74 & \textbf{0.62} & \textbf{0.48} & \textbf{0.28} \\ \hline
\multirow{6}{*}{\begin{tabular}[c]{@{}c@{}}CUB200-\\ 2011\end{tabular}} & KNN & 0.86 & 0.78 & 0.46 & 0.00 & 0.00 & 0.00 \\
 & DPA & 0.94 & 0.92 & 0.66 & 0.00 & 0.00 & 0.00 \\
 & FCert & 0.96 & 0.88 & 0.66 & 0.00 & 0.00 & 0.00 \\ \cdashline{2-8}[1pt/1pt] 
 & \textbf{LeFCert} & \textbf{0.96} & \textbf{0.94} & \textbf{0.84} & 0.00 & 0.00 & 0.00 \\
 & \textbf{LeFCert-L} & 0.80 & 0.70 & 0.48 & 0.26 & 0.14 & 0.08 \\
 & \textbf{LeFCert-LD} & 0.90 & 0.80 & 0.70 & \textbf{0.54} & \textbf{0.28} & \textbf{0.22} \\ \hline
\end{tabular}
\end{table}

\begin{figure}[!ht]
    \centering
    \subfigure[CIFAR-FS]{\includegraphics[width=0.235\textwidth,height=2.95cm]{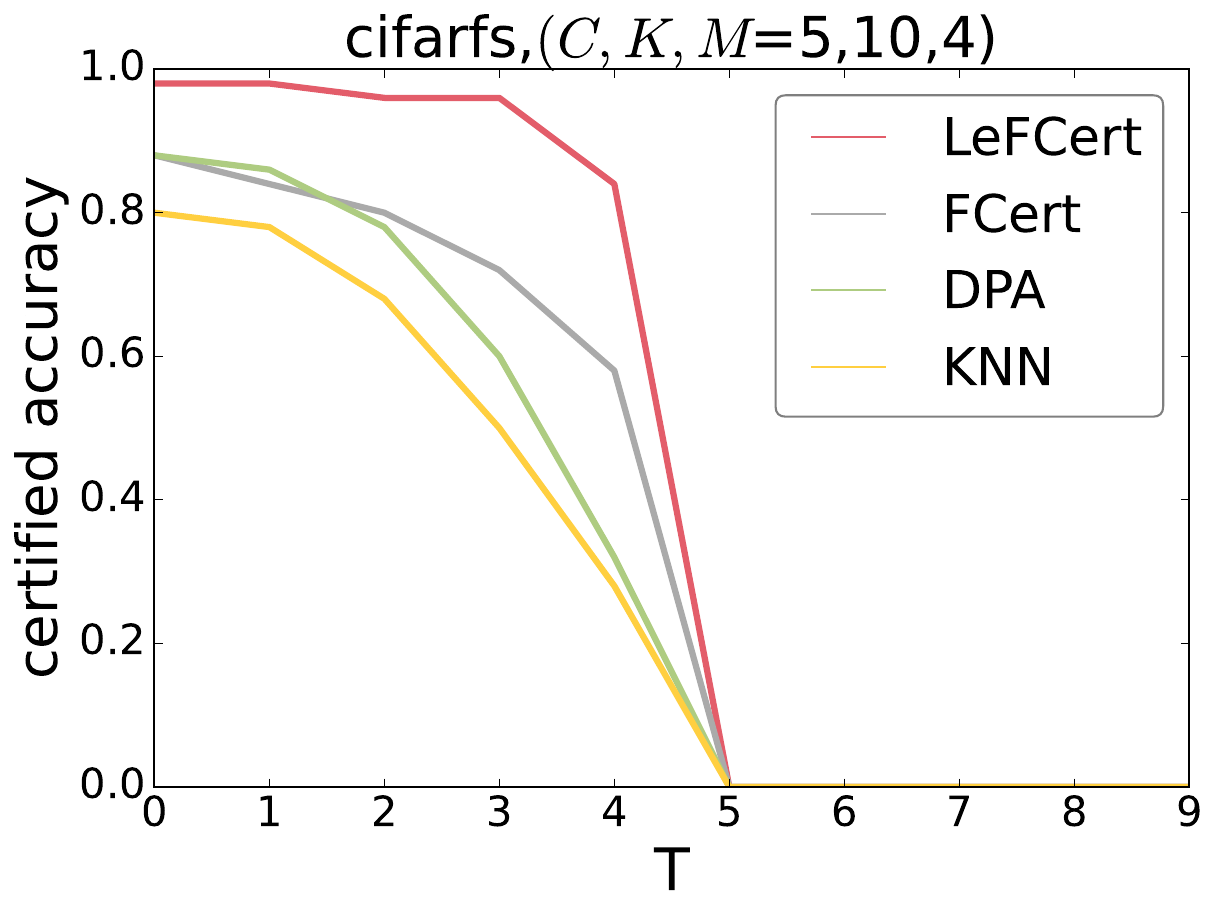}}
    \subfigure[CUB200-2011]{\includegraphics[width=0.235\textwidth,height=2.95cm]{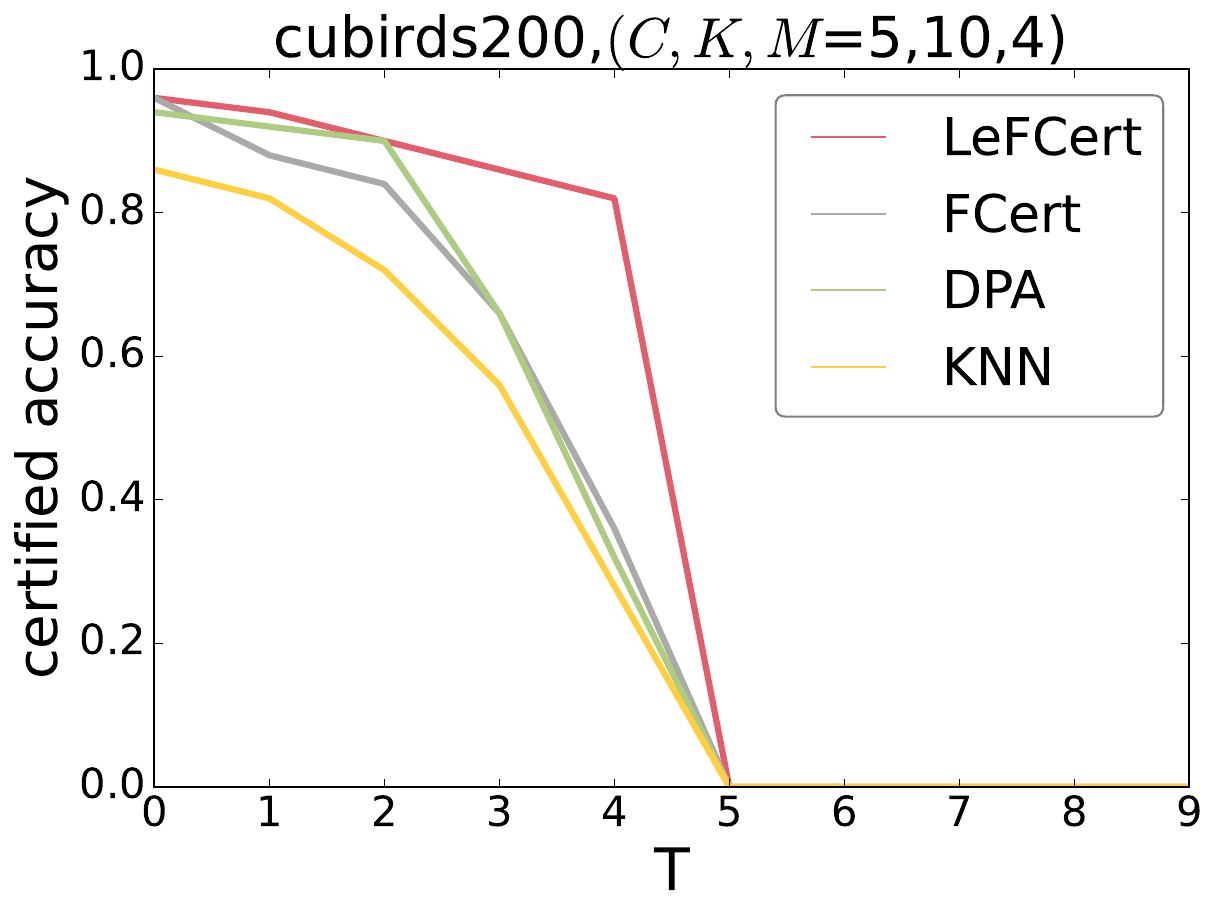}}
    \subfigure[Tiered-ImageNet]{\includegraphics[width=0.235\textwidth,height=2.95cm]{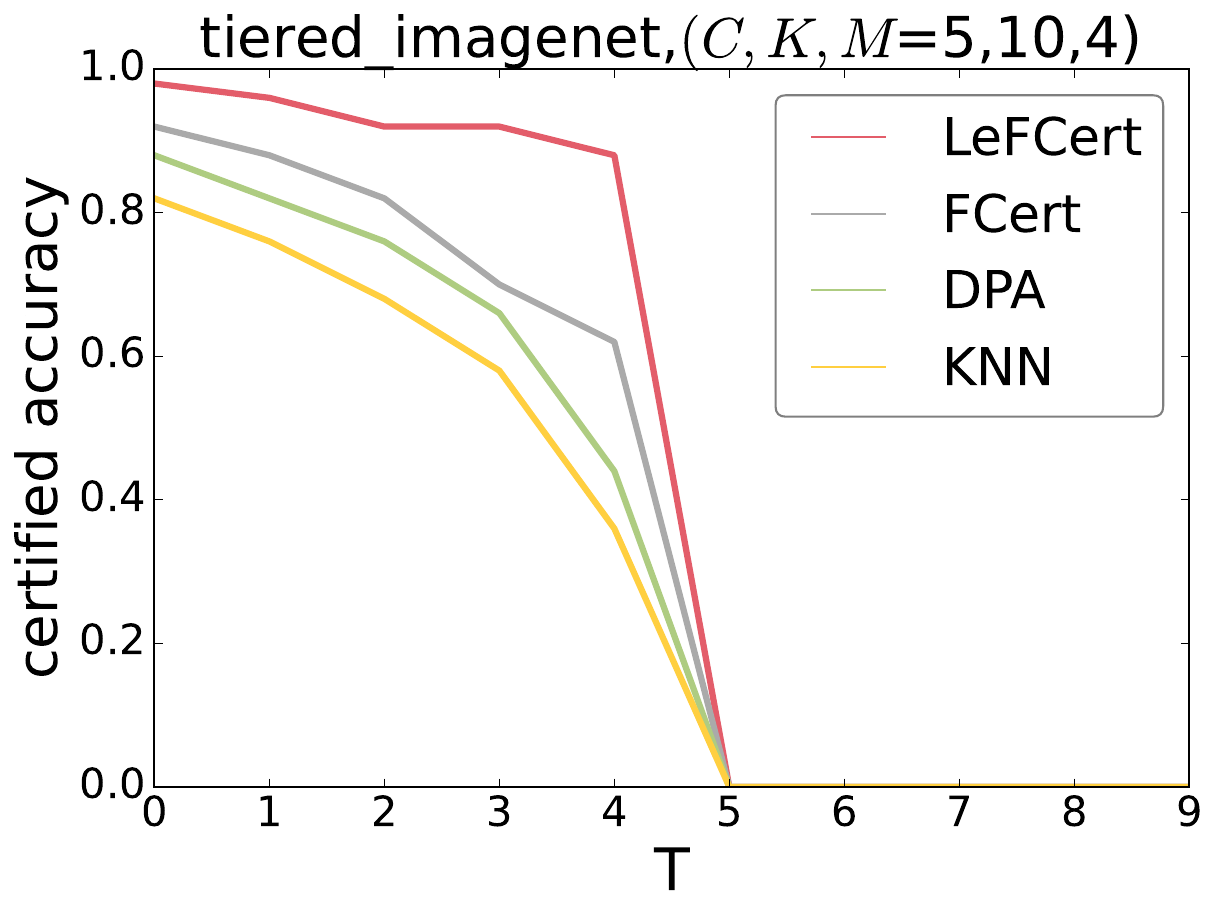}}
    \subfigure[Tiered-ImageNet]{\includegraphics[width=0.235\textwidth,height=2.95cm]{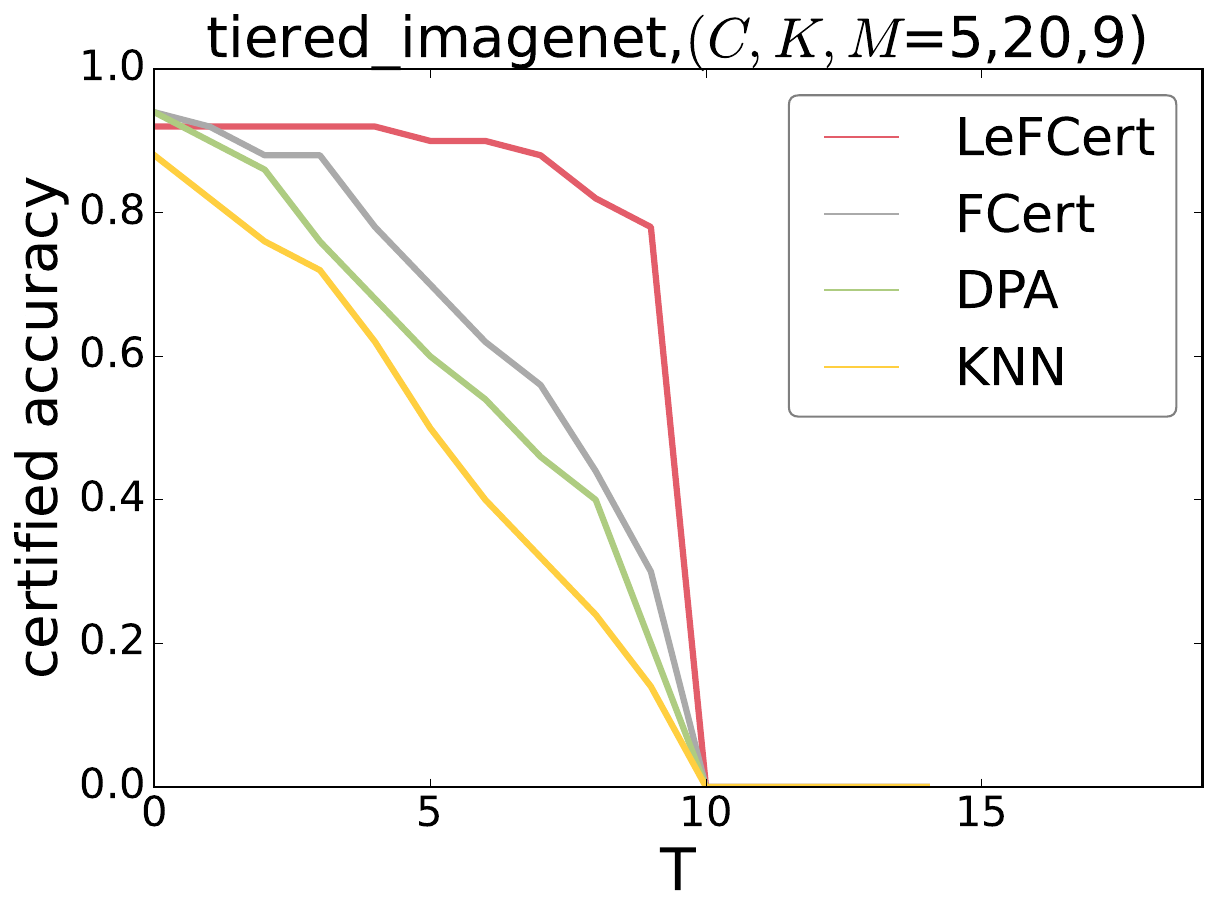}}
    \caption{Certified accuracy comparison on images. \textbf{LeFCert (red) outperforms the baseline significantly.}}
\label{fig:images_Lipt_sigma1.0}
\vspace{-5pt}
\end{figure}

\subsubsection{Certified Accuracy Comparison}





To address \textbf{Q1}, we evaluate the certified robustness of LeFCert and its variants (LeFCert-L, LeFCert-LD, and LeFCert-C) against state-of-the-art baselines, including KNN, DPA, and FCert. The certified accuracy results are presented in Tables~\ref{tab:cert_images}, \ref{tab:cert_graphs}, \ref{tab:cert_collect}, and Figures~\ref{fig:images_Lipt_sigma1.0} and~\ref{fig:graphs}. These results demonstrate that LeFCert and its variants consistently outperform state-of-the-art baselines, providing substantial improvements in certified robustness under various poisoning scenarios.

On image datasets, LeFCert consistently achieves significant improvements in certified accuracy over the baselines (Table~\ref{tab:cert_images}, Figure~\ref{fig:images_Lipt_sigma1.0}). For instance, at a poisoning size of $T=3$, on CIFAR-FS, LeFCert achieves a certified accuracy of 96\%, which outperforms FCert, DPA, and KNN by 33\%, 60\%, and 100\%, respectively. On Tiered-ImageNet, LeFCert achieves a certified accuracy of 92\%, representing a 31\% improvement over FCert and a 39\% improvement over DPA. Similarly, on CUB200-2011, LeFCert achieves 84\% certified accuracy, while FCert and DPA are 66\%, and KNN is 46\%. Furthermore, LeFCert possesses the highest clean accuracy across all the datasets. These results validate the effectiveness of incorporating textual embeddings, which enhance the model’s ability to counteract adversarial poisoning.

\begin{figure}[!ht]
    \centering
    \subfigure[Cora]{\includegraphics[width=0.235\textwidth,height=2.95cm]{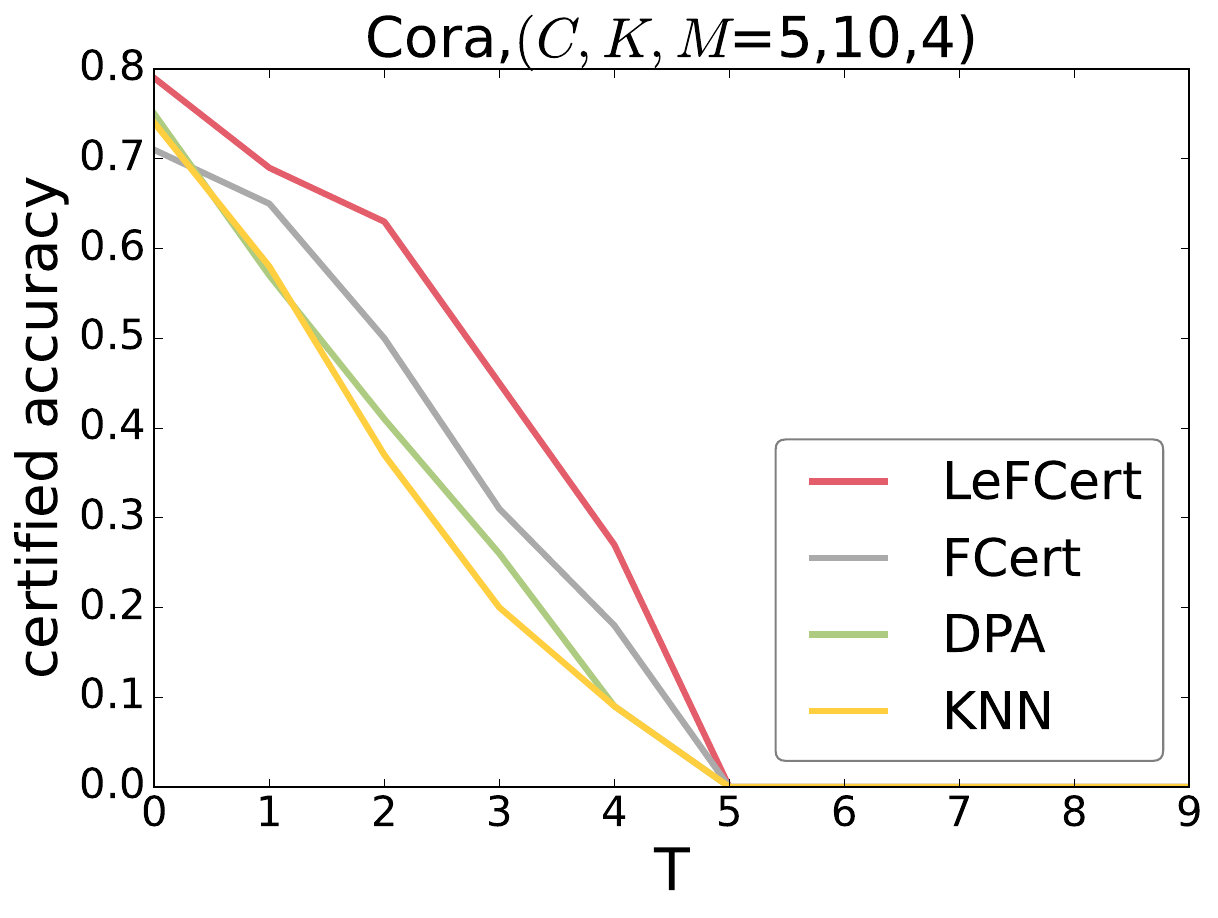}}
    \subfigure[Citeseer]{\includegraphics[width=0.235\textwidth,height=2.95cm]{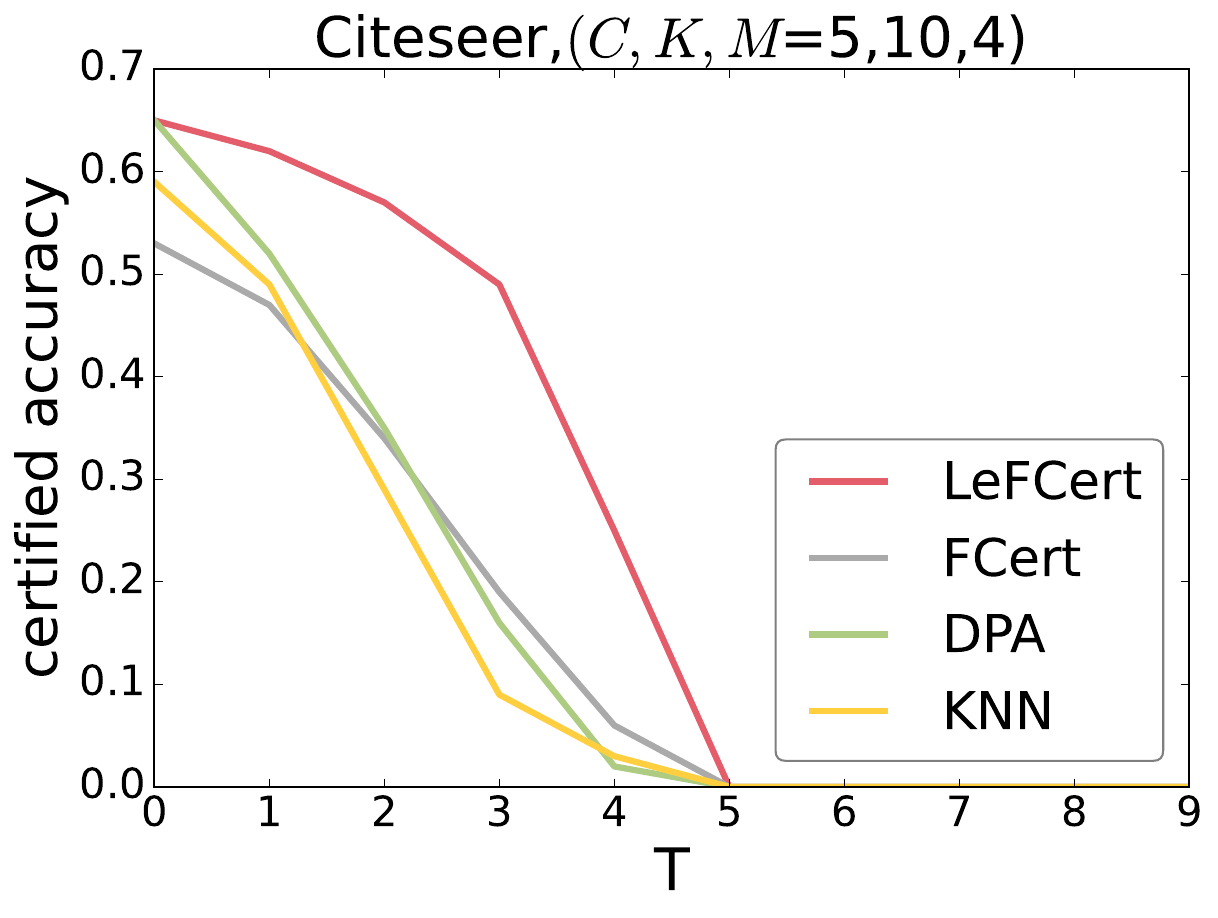}}
    \caption{Certified accuracy comparison on graphs.}
    \label{fig:graphs}
\end{figure}

\begin{table}[!h]
\centering
\setlength{\tabcolsep}{2pt}
\caption{Certified accuracy comparison on graphs.}
\label{tab:cert_graphs}
\begin{tabular}{clrrrrrr}
\hline
\multicolumn{2}{c}{5-way, 10-shot} & \multicolumn{1}{c}{Clean} & \multicolumn{5}{c}{Certified Accuracy ($T$)} \\ \hline
Datasets & \multicolumn{1}{l}{Models} & \multicolumn{1}{r}{0} & \multicolumn{1}{r}{1} & \multicolumn{1}{r}{2} & \multicolumn{1}{r}{3} & \multicolumn{1}{r}{4} & \multicolumn{1}{r}{5} \\ \hline
\multirow{4}{*}{Cora} & KNN & 0.74 & 0.58 & 0.37 & 0.20 & 0.09 & 0.00 \\
 & DPA & 0.75 & 0.57 & 0.41 & 0.26 & 0.09 & 0.00 \\
 & FCert & 0.71 & 0.65 & 0.50 & 0.31 & 0.18 & 0.00 \\ \cdashline{2-8}[1pt/1pt]
 & \textbf{LeFCert} & \textbf{0.79} & \textbf{0.69} & \textbf{0.63} & \textbf{0.45} & \textbf{0.27} & 0.00 \\ \hline
\multirow{4}{*}{CiteSeer} & KNN & 0.59 & 0.49 & 0.29 & 0.09 & 0.03 & 0.00 \\
 & DPA & 0.65 & 0.52 & 0.35 & 0.16 & 0.02 & 0.00 \\
 & FCert & 0.53 & 0.47 & 0.34 & 0.19 & 0.06 & 0.00 \\ \cdashline{2-8}[1pt/1pt]
 & \textbf{LeFCert} & \textbf{0.65} & \textbf{0.62} & \textbf{0.57} & \textbf{0.49} & \textbf{0.25} & 0.00 \\ \hline
\end{tabular}
\end{table}

On graph datasets, LeFCert also shows significantly superior certified robustness (Figure~\ref{fig:graphs}). On CiteSeer, at $T=3$, LeFCert achieves a certified accuracy of 49\%, outperforming FCert, DPA, and KNN by 157\%, 206\%, and 444\%, respectively. A similar advantage is observed on the Cora dataset. These results highlight the generalization capability and broad applicability of LeFCert, making it a robust choice across diverse tasks.

The variants LeFCert-L and LeFCert-LD demonstrate remarkable robustness under high poisoning budgets (Tables~\ref{tab:cert_images}). For example, on Tiered-ImageNet with $T=7$, LeFCert-LD achieves a certified accuracy of 48\%, while all the baselines are 0\%. Comparing LeFCert-LD with LeFCert-L, the denoise diffusion model can effectively rescue the clean accuracy drop caused by the random noise. Obtaining Lipschitz continuity through randomized smoothing (LeFCert-L) and further denoising input perturbations through a diffusion model (LeFCert-LD), the experiment results illustrate that this design is particularly effective in dual constraint scenarios where the adversary can manipulate the images within an imperceptible $l_2$-norm ball.

\begin{table}[!ht]
\centering
\setlength{\tabcolsep}{2pt}
\caption{Certified accuracy comparison on images and graphs (Sample-wise v.s. Collective).}
\label{tab:cert_collect}
\begin{tabular}{clrrrrrr}
\hline
\multicolumn{2}{c}{5-way, 10-shot} & \multicolumn{1}{l}{Clean} & \multicolumn{5}{c}{Certified Accuracy (T)} \\ \hline
\multicolumn{1}{l}{Datasets} & Models & 0 & 1 & 2 & 3 & 4 & 5 \\ \hline
\multirow{3}{*}{CIFAR-FS} & FCert & 0.85 & 0.80 & 0.72 & 0.61 & 0.37 & 0.00 \\
 & \textbf{LeFCert} & \textbf{0.99} & \textbf{0.99} & \textbf{0.99} & 0.96 & 0.90 & 0.00 \\
 & \textbf{LeFCert-C} & \textbf{0.99} & \textbf{0.99} & \textbf{0.99} & \textbf{0.97} & \textbf{0.94} & 0.00 \\ \hline
\multirow{3}{*}{\begin{tabular}[c]{@{}c@{}}Tiered-\\ ImageNet\end{tabular}} & FCert & 0.88 & 0.82 & 0.76 & 0.64 & 0.49 & 0.00 \\
 & \textbf{LeFCert} & \textbf{0.99} & 0.96 & 0.93 & 0.85 & 0.71 & 0.00 \\
 & \textbf{LeFCert-C} & \textbf{0.99} & \textbf{0.97} & \textbf{0.95} & \textbf{0.91} & \textbf{0.85} & 0.00 \\ \hline
\multirow{3}{*}{Cora} & FCert & 0.71 & 0.61 & 0.46 & 0.31 & 0.17 & 0.00 \\
 & \textbf{LeFCert} & \textbf{0.72} & 0.61 & 0.50 & 0.37 & 0.24 & 0.00 \\
 & \textbf{LeFCert-C} & \textbf{0.72} & \textbf{0.67} & \textbf{0.58} & \textbf{0.46} & \textbf{0.41} & 0.00 \\ \hline
\multirow{3}{*}{CiteSeer} & FCert & 0.59 & 0.49 & 0.36 & 0.22 & 0.07 & 0.00 \\
 & \textbf{LeFCert} & \textbf{0.66} & 0.58 & 0.51 & 0.41 & 0.15 & 0.00 \\
 & \textbf{LeFCert-C} & \textbf{0.66} & \textbf{0.62} & \textbf{0.59} & \textbf{0.55} & \textbf{0.31} & 0.00 \\ \hline
\end{tabular}
\end{table}

LeFCert-C extends certified robustness by addressing a collective budget constraint, where an attacker seeks to disrupt as many test samples as possible within a shared poisoning budget. As shown in Table~\ref{tab:cert_collect}, on CIFAR-FS at $T=4$, LeFCert-C achieves a certified accuracy of 94\%, representing a 4\% improvement over LeFCert and a remarkable 154\% improvement over FCert (37\%). Similarly, on Tiered-ImageNet at $T=4$, LeFCert-C achieves 85\%, surpassing LeFCert by 20\% and FCert by 73\%. Even larger improvements are observed on graph datasets, further emphasizing its effectiveness. These results validate the strength of LeFCert-C’s collective certification design, which enhances robustness by effectively modeling tighter attack constraints.

\subsubsection{Accuracy-Robustness Trade-off }

\begin{figure}[!ht]
    \centering
    \subfigure[Cora]{\includegraphics[width=0.235\textwidth,height=2.95cm]{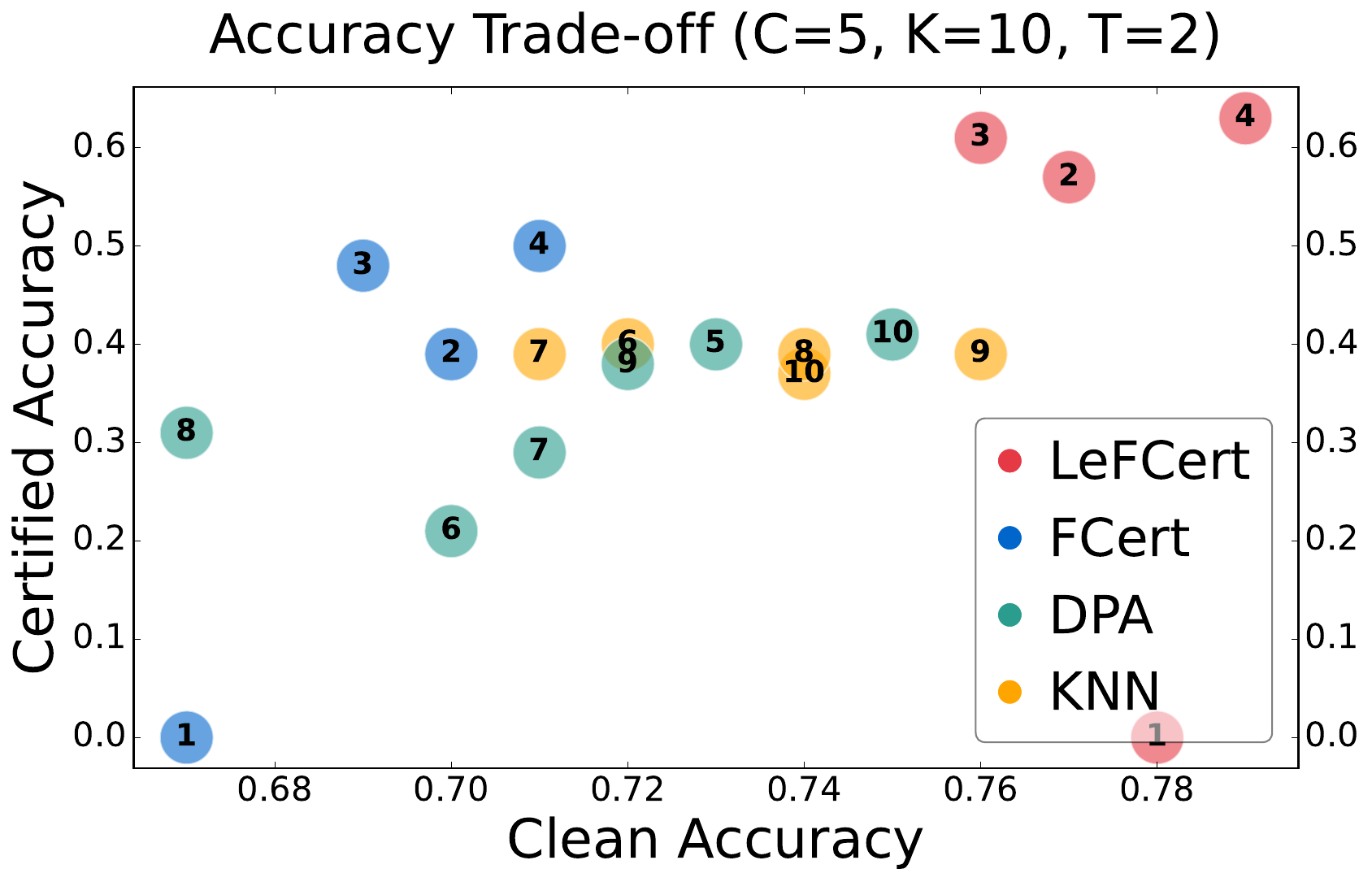}}
    \subfigure[Citeseer]{\includegraphics[width=0.235\textwidth,height=2.95cm]{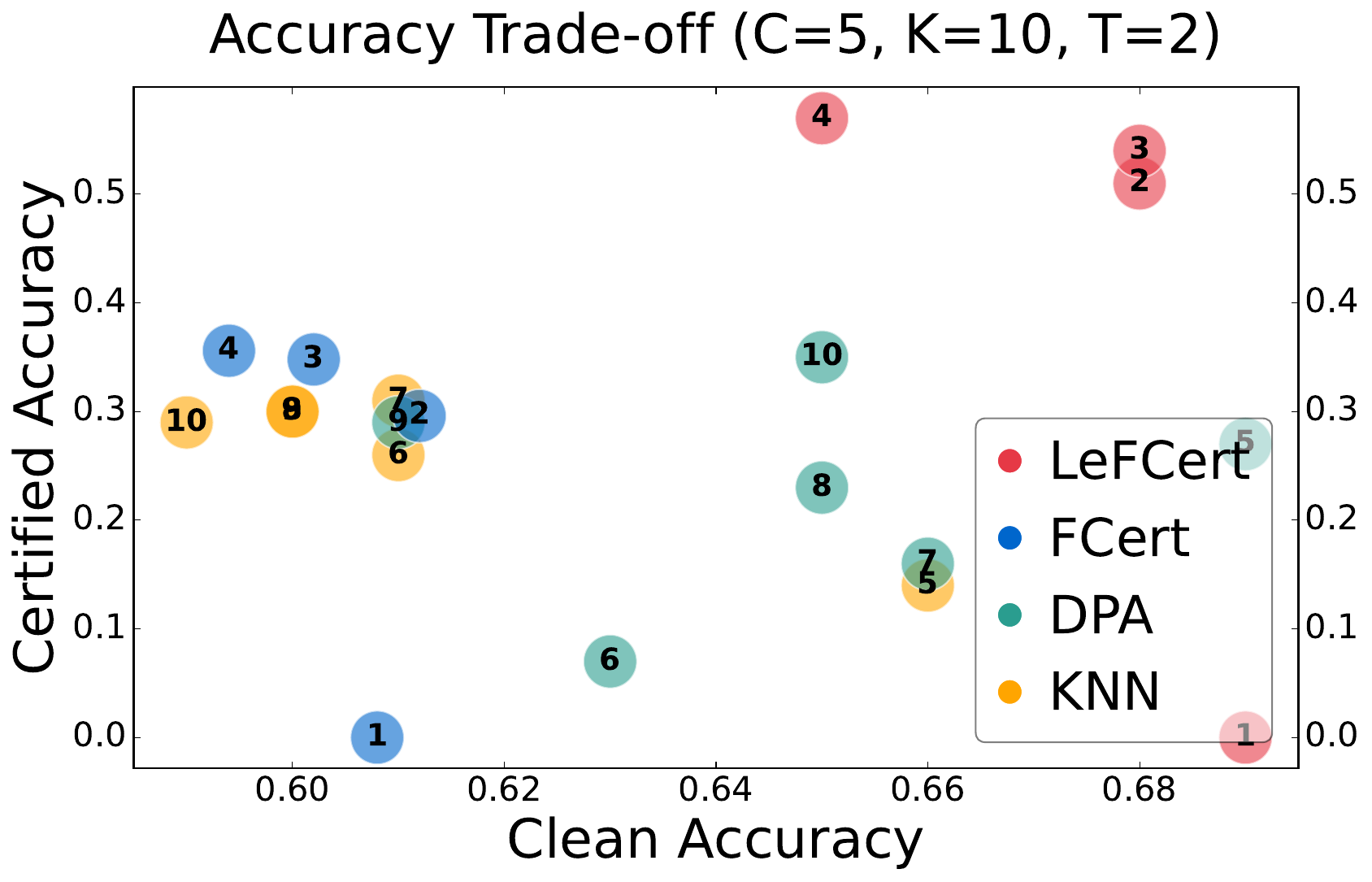}}
    \subfigure[CIFAR-FS]{\includegraphics[width=0.235\textwidth,height=2.95cm]{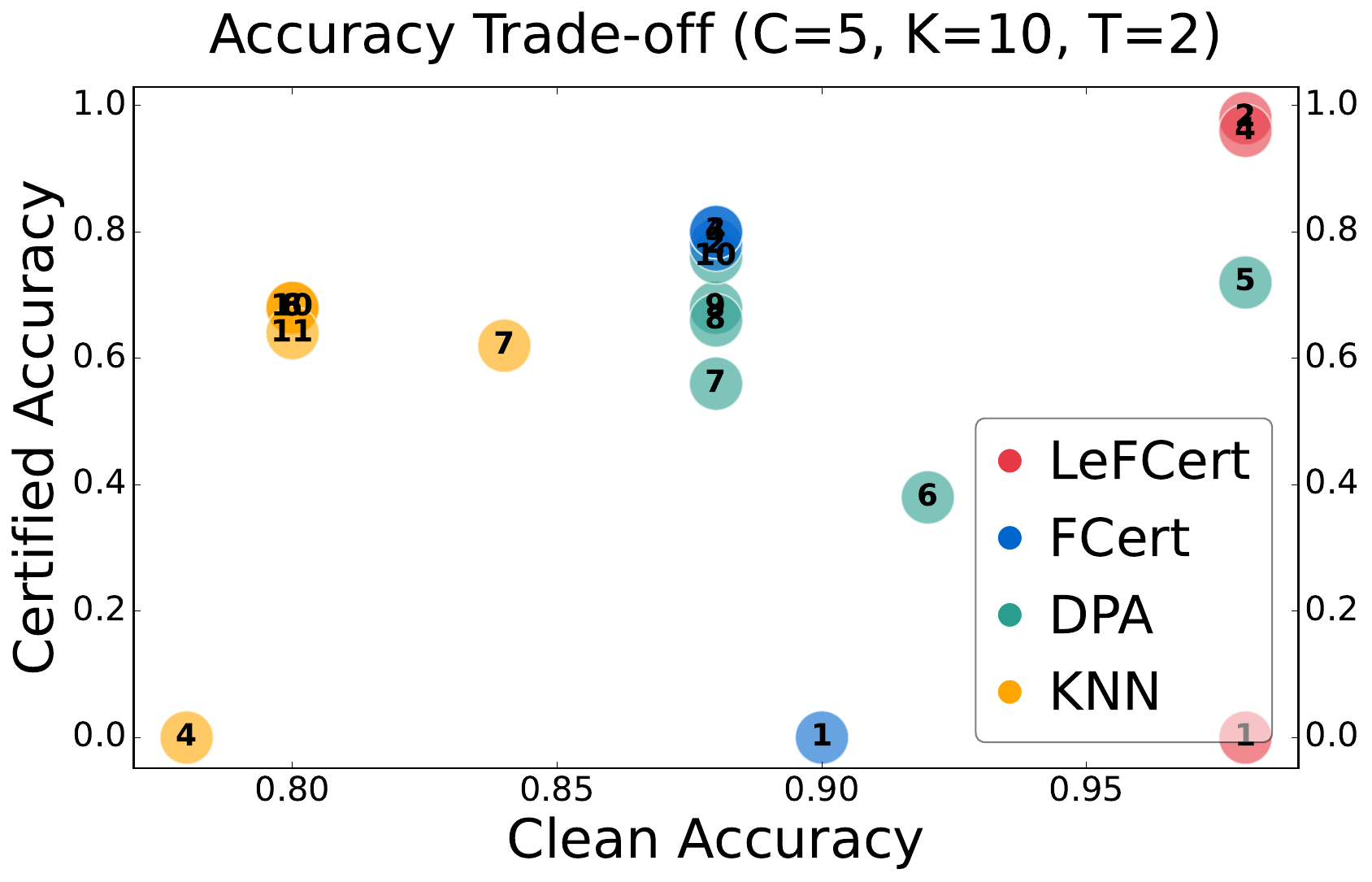}}
    \subfigure[CUB200-2011]{\includegraphics[width=0.235\textwidth,height=2.95cm]{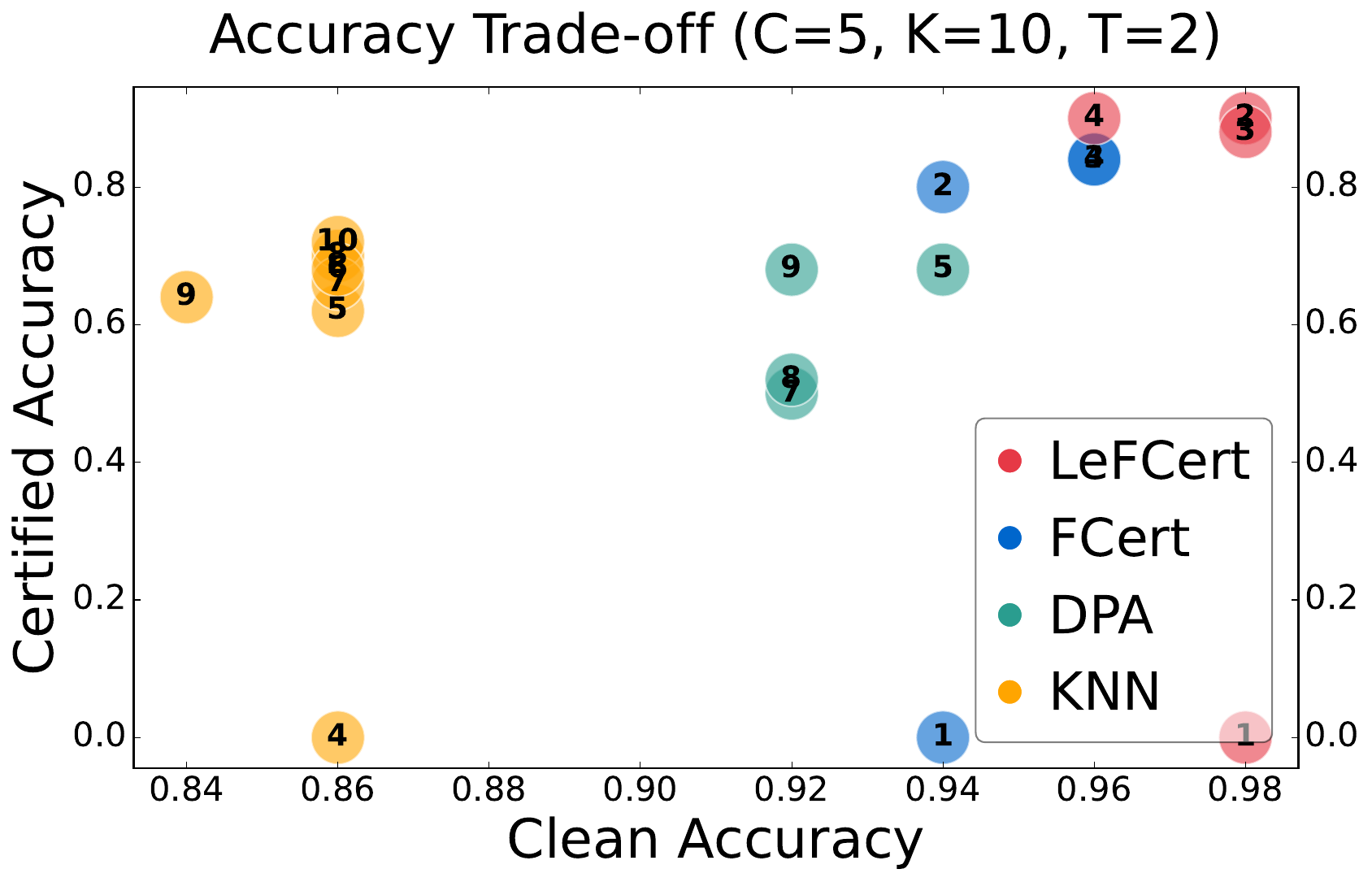}}
    \subfigure[Tiered-ImageNet]{\includegraphics[width=0.235\textwidth,height=2.95cm]{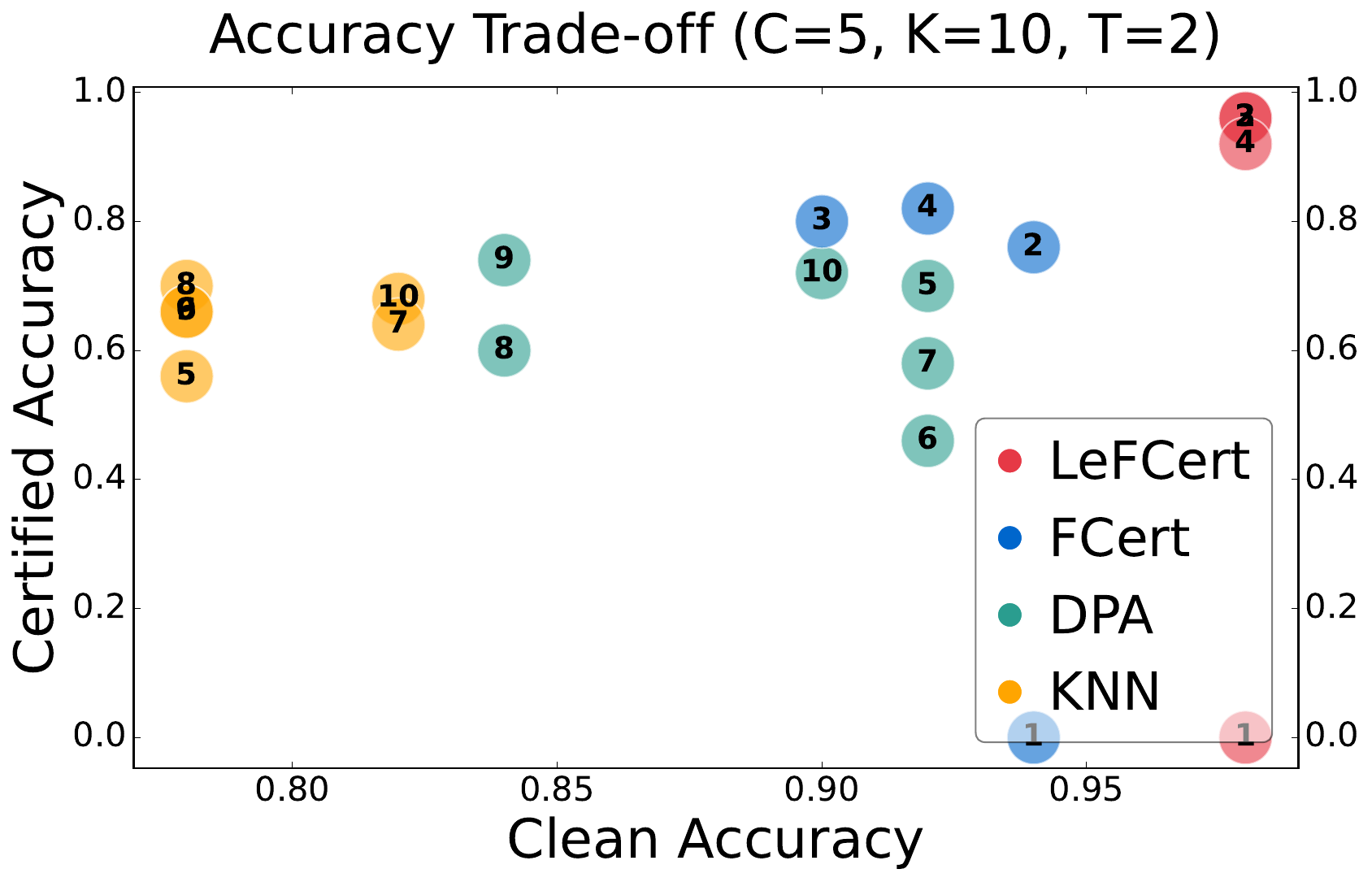}}
    \subfigure[Tiered-ImageNet]{\includegraphics[width=0.235\textwidth,height=2.95cm]{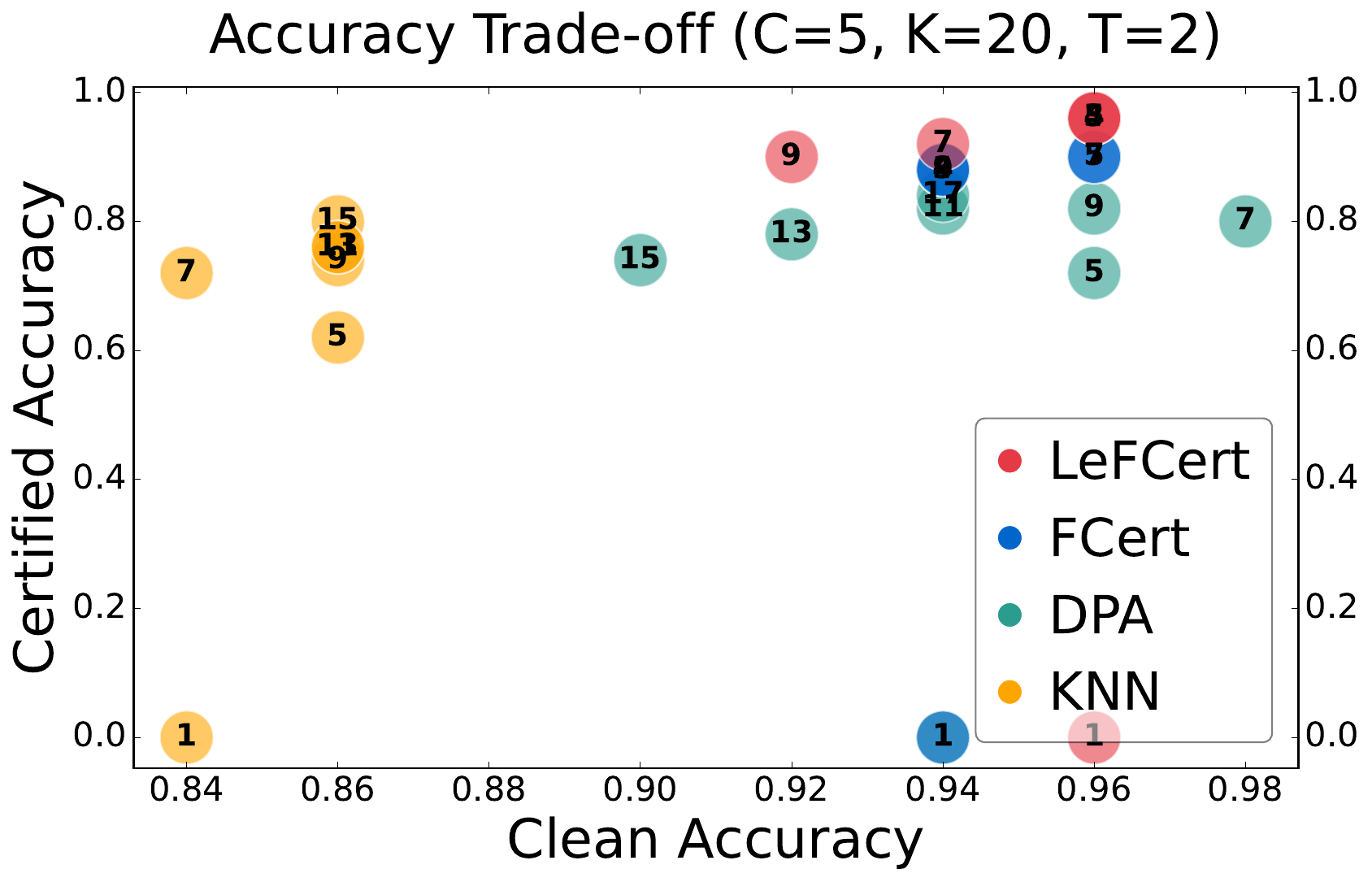}}
    \caption{Accuracy trade-off with various parameters. \textbf{LeFCert (represented by red circles) consistently outperforms the baselines, achieving a significantly better accuracy-robustness trade-off}.}
\label{fig:Tradeoff}
\vspace{-10pt}
\end{figure}

Figure~\ref{fig:Tradeoff} illustrates the trade-off between model (clean) accuracy and certifiably robust accuracy across different models and various parameters. For KNN, this trade-off is governed by the hyperparameter $k$, which determines the number of nearest neighbors used for predictions. A larger $k$ can guarantee more attacked samples, but it might decrease the clean accuracy. Similarly, in DPA, increasing the group number allows the model to tolerate a larger poisoning budget but reduces the availability of support samples per group, thereby negatively impacting clean accuracy. In both FCert and LeFCert, the trade-off is influenced by the trimming parameter $M$, which balances the exclusion of outliers and the retention of support samples. The circles closer to the upper right corner represent both higher clean and certified accuracy. Across all datasets, LeFCert (represented by red circles) consistently outperforms the baselines, achieving a significantly better accuracy-robustness trade-off. Unlike the baselines, LeFCert effectively leverages its integration of textual and feature embeddings, along with its robust trimming mechanisms, to mitigate the impact of adversarial poisoning while simultaneously preserving clean accuracy.

\subsubsection{Computation Efficiency}

To address \textbf{Q3}, we benchmark the computational efficiency of LeFCert and its variants against baselines under both default and collective protocols. The results, summarized in Tables~\ref{tab:runtime} and~\ref{tab:runtime_collect}, highlight the superior balance achieved by LeFCert between performance and computational cost.

Under the default protocol (Tables~\ref{tab:runtime}), LeFCert demonstrates a small computational burden while delivering significant improvements in robustness. For instance, on CIFAR-FS, LeFCert requires only 2.13 seconds for 10 episodes, which is only 13\% higher than FCert (1.88 seconds) and 4\% higher than KNN (2.04 seconds). These results demonstrate that despite the integration of textual embeddings and advanced robustness mechanisms, LeFCert maintains computational efficiency comparable to simpler baselines like FCert, while significantly outperforming models like DPA (e.g., 2.13 vs. 3.30 seconds on CIFAR-FS). Similar results are observed in other datasets.

In contrast, the advanced variants LeFCert-L and LeFCert-LD incur higher computational costs due to their reliance on randomized smoothing and diffusion denoise smoothing. For example, on Tiered-ImageNet, LeFCert-L requires 51.30 seconds per episode, while LeFCert-LD requires 3678.23 seconds. These results highlight the trade-off between extreme robustness and computational efficiency, with LeFCert-L and LeFCert-LD being better suited for scenarios requiring robustness under high poisoning size (T).

Under the collective protocol, LeFCert continues to demonstrate its efficiency, with runtime only marginally higher than FCert (Table~\ref{tab:runtime_collect}). For example, on CIFAR-FS, LeFCert takes 2.42 seconds, just 5\% higher than FCert (2.30 seconds). LeFCert-C, designed for collective certification, incurs a higher runtime (e.g., 5.95 seconds on CIFAR-FS) due to its more comprehensive optimization of shared poisoning budgets. In Figure~\ref{fig:time_Collect} (in Appendix~\ref{sec:more_results}), we further visualize the runtime of budget allocation as the budget $T$ increases. Although the runtime increases significantly with larger budgets, it is important to note that in few-shot scenarios, the number of shots $K$ is typically small, resulting in a limited budget range ($T \leq K-M-1$). For a larger $K=20$, it takes less than $25s$. This ensures the practicality of our algorithm. Furthermore, this additional cost is justified by the significant robustness improvements.

\begin{table}[!ht]
\centering
\setlength{\tabcolsep}{2pt}
\caption{Comparison of running time (s) for 10 episodes (Default Protocol). \textbf{Within $0.9s$, LeFCert can verify $C$ testing samples per episode, which is similar to the baselines}.}
\label{tab:runtime}
\begin{tabular}{crrrrr}
\hline
 Models & \multicolumn{1}{c}{\begin{tabular}[c]{@{}c@{}}CIFAR-\\FS\end{tabular}} & \multicolumn{1}{c}{\begin{tabular}[c]{@{}c@{}}Tiered-\\ ImageNet\end{tabular}} & \multicolumn{1}{c}{\begin{tabular}[c]{@{}c@{}}CUB200-\\ 2011\end{tabular}} & \multicolumn{1}{c}{Cora} & \multicolumn{1}{c}{CiteSeer} \\ \hline
KNN & 2.04 & 5.24 & 3.36 & 6.96 & 3.44 \\
DPA & 3.30 & 7.40 & 5.43 & 8.63 & 4.98 \\
FCert & 1.88 & 5.33 & 3.40 & 8.11 & 3.88 \\
\cdashline{1-6}[1pt/1pt]
\textbf{LeFCert} & 2.13 & 5.48 & 4.14 & 8.67 & 4.19 \\
\textbf{LeFCert-L} & 18.71 & 51.30 & 20.14 & - & - \\
\textbf{LeFCert-LD} & 42.51 & 3678.23 & 3716.74 & - & - \\ \hline
\end{tabular}
\end{table}

\begin{table}[!ht]
\centering
\setlength{\tabcolsep}{2pt}
\caption{Comparison of running time (s) (Collective Protocol). \textbf{LeFCert-C can verify $100$ testing samples within $7s$}.}
\label{tab:runtime_collect}
\begin{tabular}{crrrr}
\hline
Models & \multicolumn{1}{c}{CIFAR-FS} & \multicolumn{1}{c}{Tiered-ImageNet} & \multicolumn{1}{c}{Cora} & \multicolumn{1}{c}{CiteSeer} \\ \hline
FCert & 2.30 & 2.51 & 3.17 & 1.90 \\
\textbf{LeFCert} & 2.42 & 2.71 & 3.73 & 2.64 \\
\textbf{LeFCert-C} & 5.95 & 6.38 & 6.34 & 5.10 \\ \hline
\end{tabular}
\end{table}


\begin{figure}[!ht]
    \centering
    \subfigure[CUB200-2011]{\includegraphics[width=0.235\textwidth,height=2.95cm]{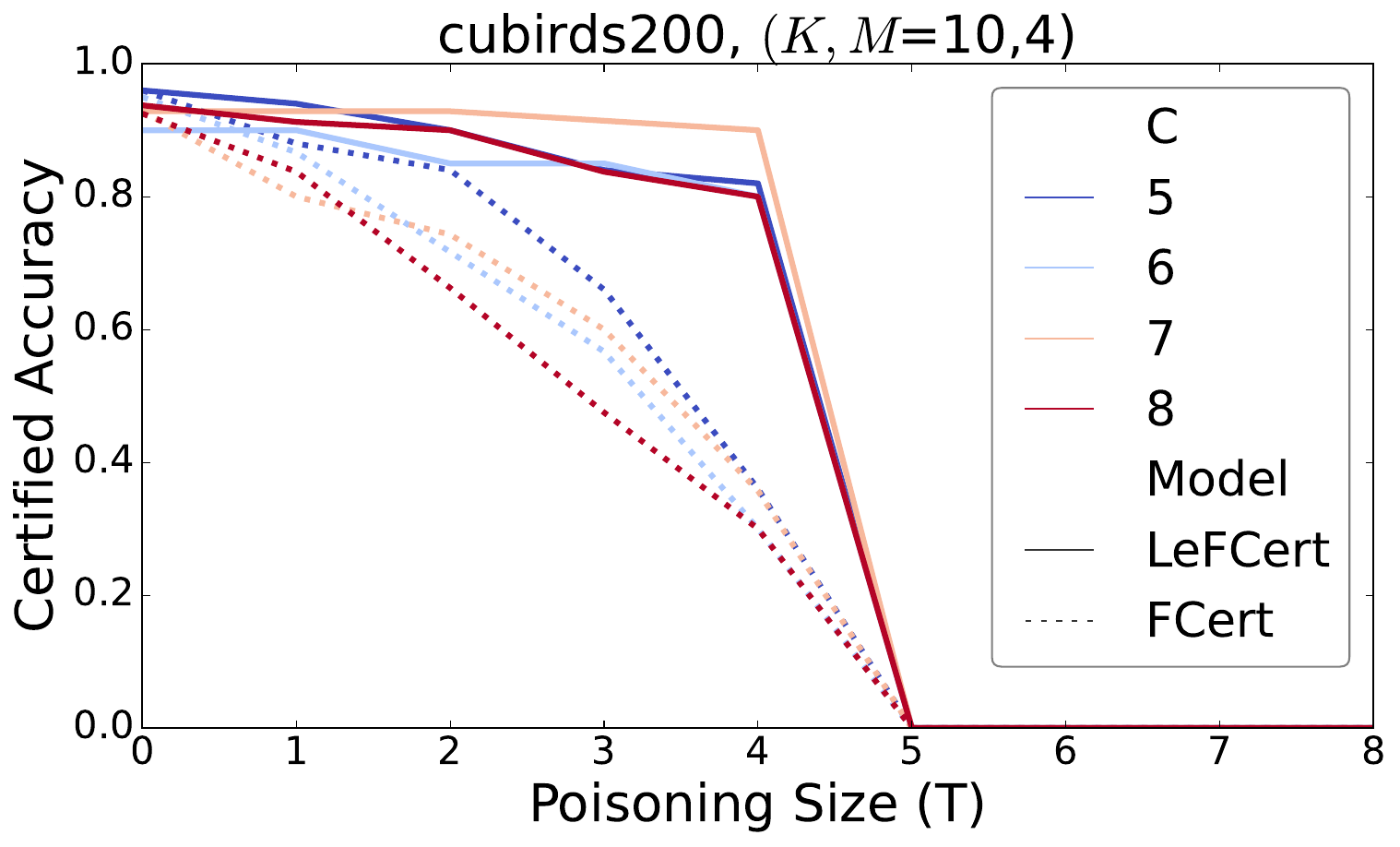}}
    \subfigure[Tiered-ImageNet]{\includegraphics[width=0.235\textwidth,height=2.95cm]{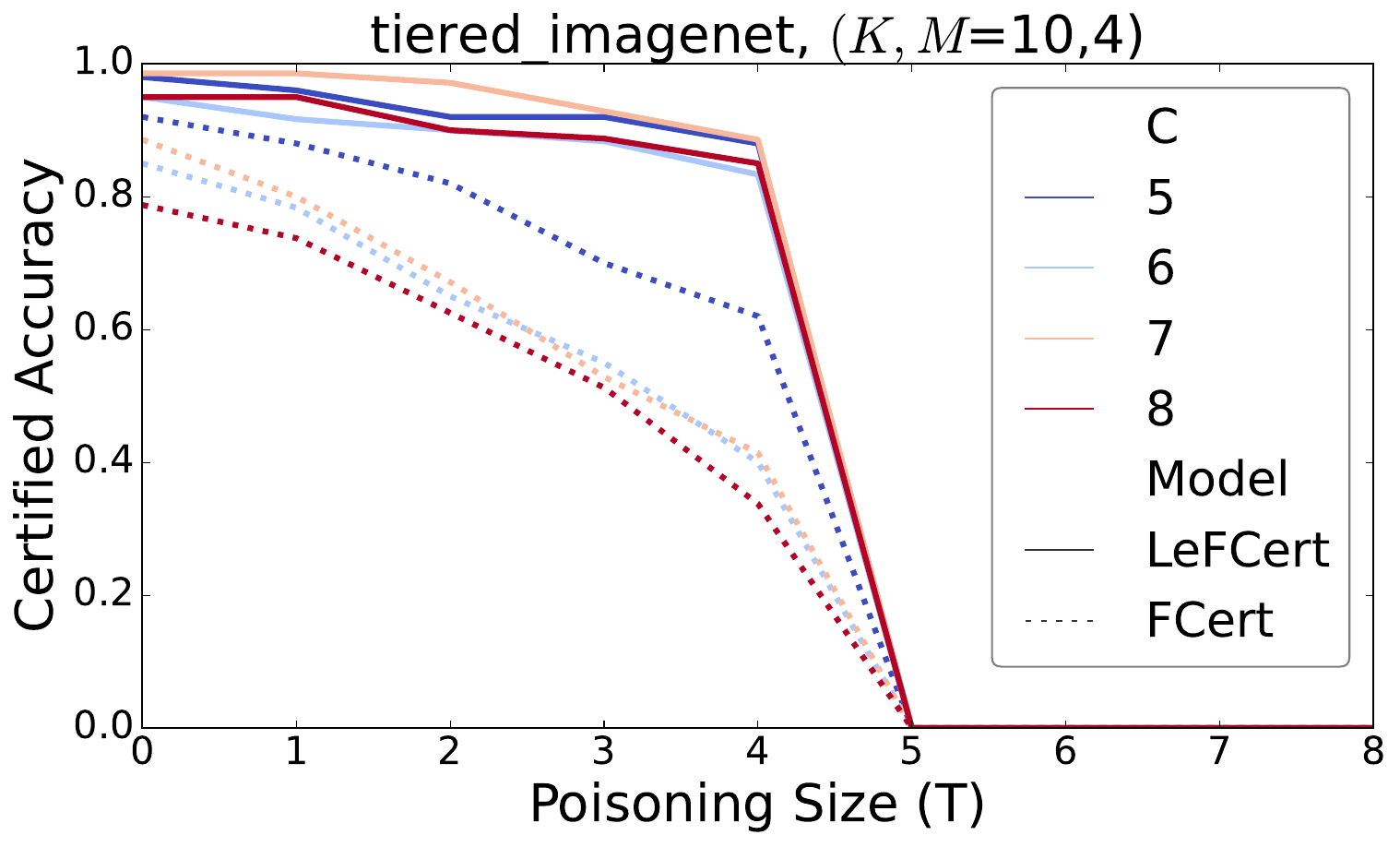}}
    \subfigure[CUB200-2011]{\includegraphics[width=0.235\textwidth,height=2.95cm]{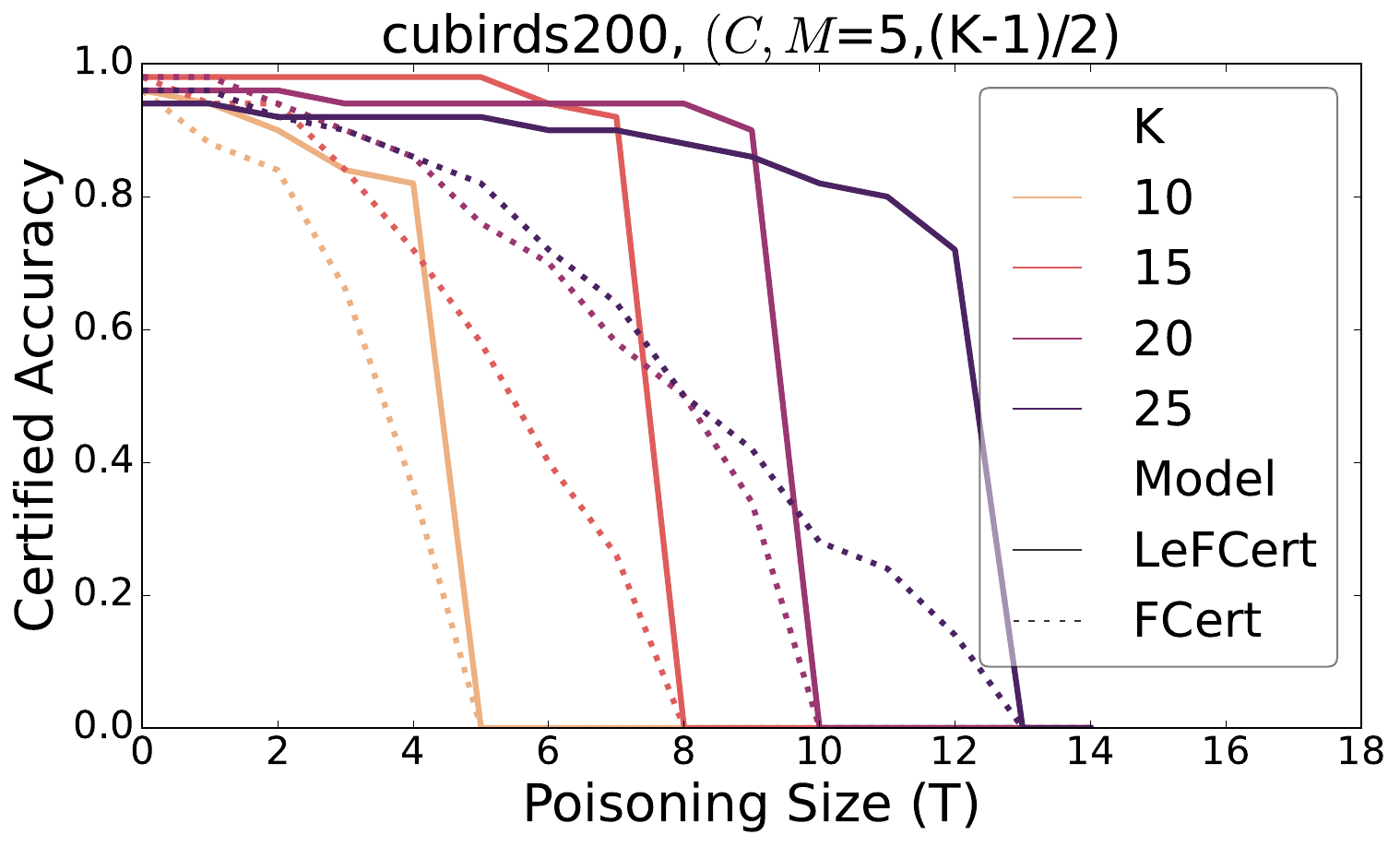}}
    \subfigure[Tiered-ImageNet]{\includegraphics[width=0.235\textwidth,height=2.95cm]{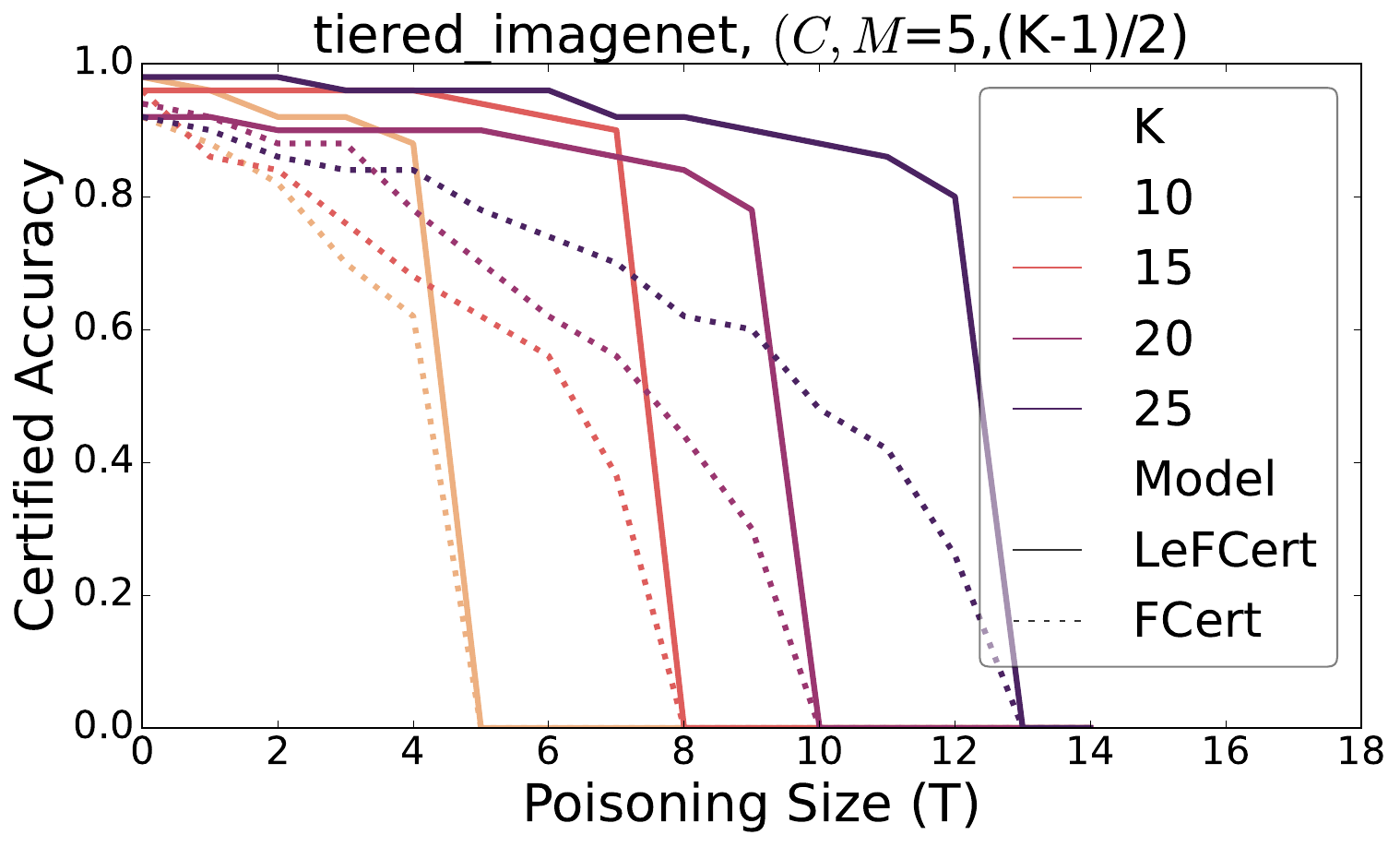}}
    \caption{Impact of few-shot settings: $C$ and $K$. \textbf{Across all settings, LeFCert consistently outperforms FCert by a significant margin, especially for larger $C$ and $K$.} }
\label{fig:images_CK}
\end{figure}

\begin{figure}[!ht]
    \centering
    \subfigure[CIFAR-FS]{\includegraphics[width=0.235\textwidth,height=2.95cm]{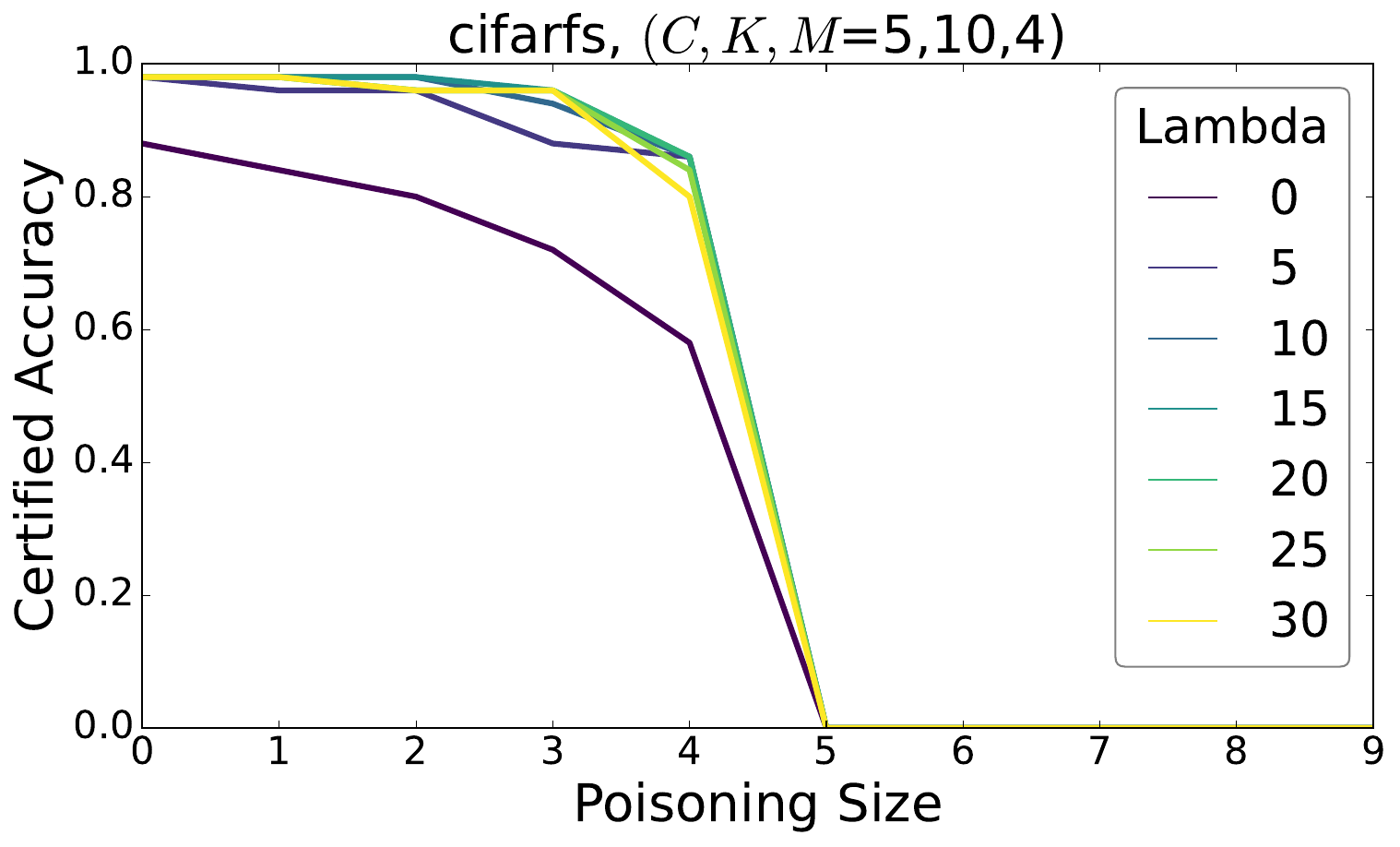}}
    \subfigure[Tiered-ImageNet]{\includegraphics[width=0.235\textwidth,height=2.95cm]{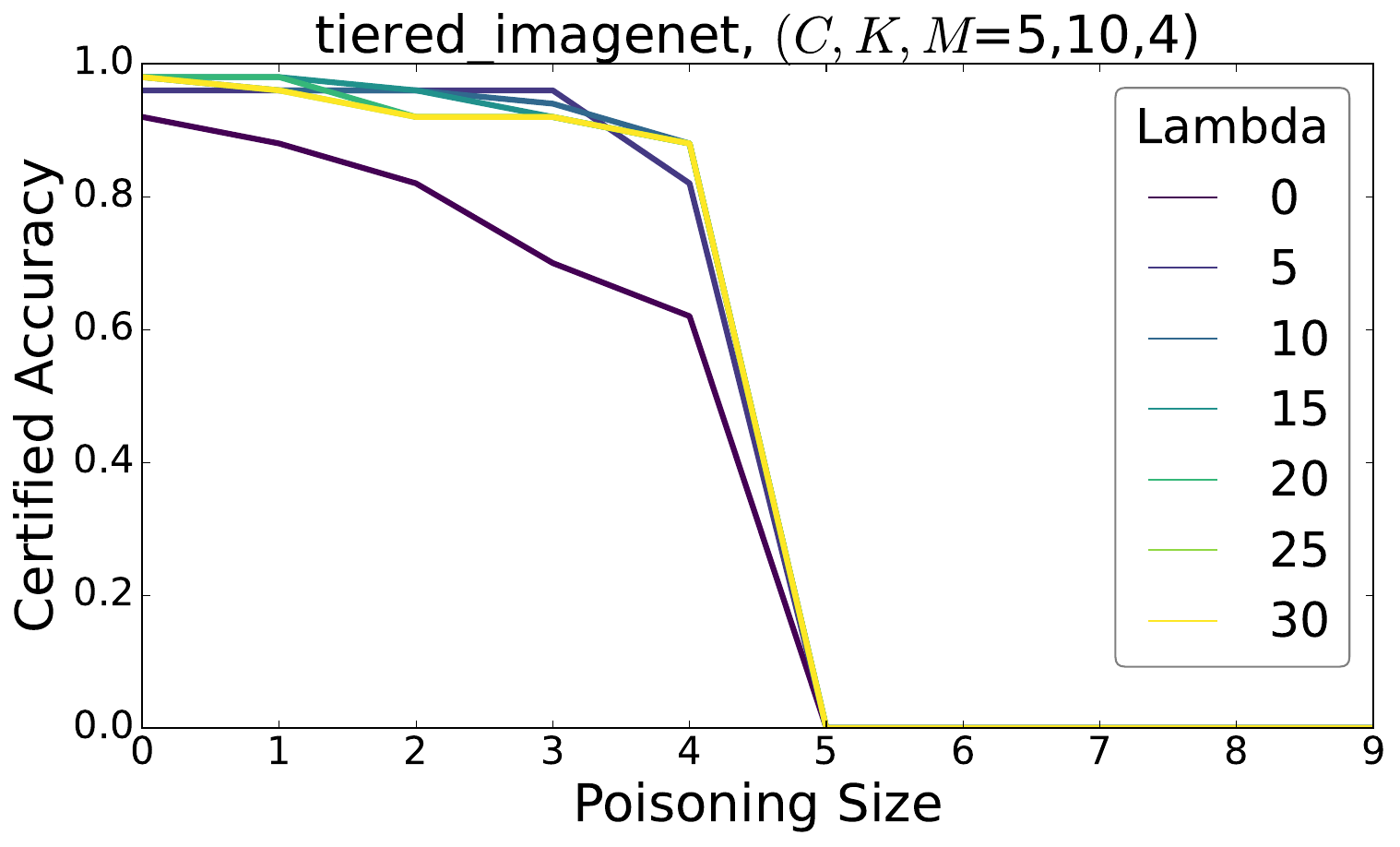}}
    \caption{Impact of hyperparameter $\lambda$. When $\lambda = 0$, LeFCert reduces to FCert, resulting in a significant drop in both clean and certified accuracy. \textbf{This study highlights the critical role of textual information in enhancing robustness}.}
\label{fig:images_Lambda}
\end{figure}

\subsubsection{Parameter Analysis}

To address \textbf{Q4}, we analyze the impact of various parameters on the performance of LeFCert, including the trimming parameter $M$, the blending parameter $\lambda$, the number of classes $C$, and the number of shots $K$. The results highlight the superior adaptability and robustness of LeFCert across diverse configurations. 

\textbf{Trimming parameter $M$:} 
The trimming parameter $M$ determines the number of outlier distances excluded during robust classification. As shown in Figure~\ref{fig:Tradeoff}, the number within the red circle represents the value of $M$ for LeFCert. We find that setting $M = \lfloor (K-1)/2\rfloor$ consistently provides optimal performance, achieving high clean and certified accuracy simultaneously.

\textbf{$C$-way $K$-shot:} In Figure~\ref{fig:images_CK}, we evaluate LeFCert under various few-shot configurations by varying the number of classes ($C$) and shots per class ($K$). Across all settings, LeFCert consistently outperforms FCert by a significant margin. Interestingly, as the number of classes increases, the gap in certified accuracy between LeFCert and FCert widens. Similarly, with more support samples per class, the certified accuracy gap grows larger. These trends demonstrate that LeFCert effectively scales to handle more complex classification tasks (higher $C$) and benefits from additional support samples (higher $K$). This superior scalability highlights the strength of LeFCert’s integration of textual and feature embeddings, which enhances its robustness even in challenging few-shot adaptation.

\textbf{Blending parameter $\lambda$:} It is a hyperparameter that blends the feature and textual information, and we evaluate various $\lambda$ in Figure~\ref{fig:images_Lambda}. We observe that LeFCert is not sensitive to the $\lambda$, allowing us to use the same value across all datasets in our experiments. This makes the hyperparameter easy to tune. However, when $\lambda = 0$, LeFCert reduces to FCert, resulting in a significant drop in both clean and certified accuracy. This ablation study highlights the critical role of textual information in enhancing robustness. The seamless integration of textual and feature embeddings in LeFCert provides a substantial advantage over FCert, emphasizing the importance of leveraging hybrid information for robust few-shot learning.

\section{Conclusion}
In this paper, we address the critical challenge of ensuring certified robustness for few-shot classifiers based on language-empowered foundation models (LeFMs). While LeFMs, such as CLIP and GraphCLIP, have demonstrated remarkable performance in multimodal learning and few-shot adaptation, their reliance on small, task-specific support sets leaves them vulnerable to poisoning attacks. Existing empirical defenses lack formal guarantees, and prior certification methods fail to leverage the hybrid embedding space of LeFMs, leaving a significant gap in robust few-shot learning.

To bridge this gap, we propose LeFCert, a novel certification framework that integrates both textual and feature embeddings to provide provable robustness guarantees. By employing an adaptive blending mechanism and a twofold trimmed mean approach, LeFCert mitigates the influence of adversarial samples while maintaining high clean accuracy. Additionally, to further enhance the certification performance, we extend LeFCert with two variants: LeFCert-L, which incorporates Lipschitz continuity to handle dual budget constraints, and LeFCert-C, which provides collective certification for multiple testing samples under shared adversarial budgets. These extensions address more realistic threat models and enhance the practical robustness of our framework.

Through extensive experiments on standard benchmarks across vision and graph domains, we demonstrate that LeFCert and its variants consistently achieve state-of-the-art performance, significantly improving both clean and certified accuracy compared to existing baselines. Notably, on CIFAR-FS, LeFCert achieves $98\%$ clean accuracy and $96\%$ certified accuracy when poisoning size $T=3$.
LeFCert-LD achieves robust certification in dual-constraint scenarios, while LeFCert-C provides stronger guarantees in collective adversarial settings. Importantly, LeFCert maintains computational efficiency, making it feasible for real-world applications.

Our contributions establish a new benchmark for robust few-shot learning with foundation models, paving the way for secure and reliable downstream applications of LeFMs in diverse domains. Future work may explore extending LeFCert to dynamic adaptation scenarios, broader multimodal datasets, and real-time certification frameworks to further enhance its applicability and impact.

\appendix
\cleardoublepage

\section*{Ethical Considerations}
This work addresses the critical challenge of ensuring certified robustness for few-shot classifiers based on language-empowered foundation models (LeFMs). Our proposed framework, \textbf{LeFCert}, enhances the reliability and security of LeFMs, particularly in safety-critical applications, by mitigating the impact of poisoning attacks. Below, we summarize the ethical considerations of this research.

\subsubsection*{Stakeholder Analysis}
Our research impacts the following stakeholders:
\begin{itemize}
    \item \textbf{Users of LeFMs:} LeFCert provides enhanced robustness against adversarial attacks, benefiting individuals and organizations relying on LeFMs for applications such as healthcare, autonomous systems, and content moderation.
    \item \textbf{Developers and researchers:} LeFCert equips practitioners with a framework for certifying robustness, advancing the state of security in AI.
    \item \textbf{Society at large:} By improving the trustworthiness of AI systems, our work promotes safer and more reliable applications of LeFMs in diverse domains.
\end{itemize}

\subsubsection*{Ethical Principles}
Our research adheres to the principles outlined in \textit{The Menlo Report}:
\begin{itemize}
    \item \textbf{Beneficence:} LeFCert mitigates risks from adversarial attacks, promoting societal welfare and improving the reliability of AI systems.
    \item \textbf{Respect for Persons:} We use publicly available datasets, avoiding privacy violations or harm to individuals.
    \item \textbf{Justice:} Our framework is broadly applicable across domains, ensuring equitable access to robust AI technologies.
    \item \textbf{Respect for Law and Public Interest:} The research complies with ethical guidelines and aims to advance public trust in AI.
\end{itemize}

\subsubsection*{Potential Harms and Mitigations}
\begin{itemize}
    \item \textbf{Misuse by adversaries:} While publishing robustness techniques can theoretically inform adversaries, the benefits of improving security for legitimate users far outweigh this risk. Our framework focuses on defensive strategies and \textbf{does not} expose vulnerabilities or attack methods that could be exploited maliciously.

    \item \textbf{Algorithmic misinterpretation:} To mitigate misconceptions, we explicitly clarify the limitations of our certification guarantees, ensuring realistic expectations.
\end{itemize}

\subsubsection*{Decision to Proceed and Publish}
The potential societal benefits of this work, including enhanced AI security and reliability, outweigh the minimal risks. Our research respects the rights and interests of all stakeholders, avoids the violation of human rights, and adheres to ethical principles. The use of publicly available datasets ensures that no individuals or groups are directly harmed. Publishing this research promotes transparency in security-focused AI research and provides a foundation for future advancements in certified robustness. It empowers researchers and practitioners to build more secure systems, benefiting both individual users and society.
By adhering to ethical principles and focusing on public interest, we ensure this research contributes positively to secure and trustworthy AI systems.

\subsubsection*{Conclusion}
LeFCert establishes a novel benchmark for certified robustness in few-shot classifiers, addressing a critical gap in secure AI research. By adhering to ethical standards and mitigating harms, we are confident that the publication of this work will contribute positively to the field of AI security and foster the development of more robust and trustworthy AI systems.

\section*{Open Science}
We conduct the experiments on the environment: Intel(R) Xeon(R) Platinum 8369B CPU @ 2.90GHz, a single NVIDIA GeForce RTX3090 24GB GPU, Ubuntu 20.04, Python 3.7, PyTorch 1.13.1. 
All the source code, datasets, and detailed environment settings for this paper are available at an anonymous GitHub Repo: https://anonymous.4open.science/r/LeFCert.

\bibliographystyle{plain}
\bibliography{references}

\section{Proofs}
\label{sec:proofs}
\begin{lemma}
\label{thm:lemma}
    Let $\mathbf{s} = [s_1, s_2, \cdots, s_K]$ be the distance vector, and let $\mathbf{p} = [p_1, p_2, \cdots, p_K]$ be its sorted version in ascending order, where $p_1 \leq p_2 \leq \cdots \leq p_K$. The trimmed sum of $\mathbf{s}$ with parameter $M$ is defined as $R=\sum_{i=M+1}^{K-M} p_i$. Assuming that the attacker can modify arbitrarily $T$ elements among $\mathbf{s}$ to minimize the $R$, the optimal strategy is to replace the $T$ largest distances with the smallest possible value (e.g., $0$).
\end{lemma}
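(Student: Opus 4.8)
The plan is to decompose the attacker's optimization into two nested choices — first \emph{which} $T$ coordinates of $\mathbf s$ to overwrite, and then \emph{what value} to put there — and to settle each by a monotonicity argument for the trimmed sum. The one tool I will rely on is the following elementary sublemma about order statistics: if $\mathbf x,\mathbf y\in\mathbb R_{\ge 0}^{K}$ satisfy $x_i\le y_i$ for all $i$, then the $j$-th smallest entry of $\mathbf x$ is at most the $j$-th smallest entry of $\mathbf y$ for every $j$, and hence $\sum_{i=M+1}^{K-M}x_{(i)}\le\sum_{i=M+1}^{K-M}y_{(i)}$, i.e. $R(\mathbf x)\le R(\mathbf y)$. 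The proof of the sublemma is a one-line counting argument: there are at least $j$ indices $i$ with $y_i\le y_{(j)}$, and each of them also satisfies $x_i\le y_i\le y_{(j)}$, so $\mathbf x$ has at least $j$ entries not exceeding $y_{(j)}$, whence $x_{(j)}\le y_{(j)}$; summing over the block $M+1,\dots,K-M$ gives the claim. I will also use the immediate corollary that $R$ is invariant under permutations of its argument, and therefore can only decrease or stay equal when a single element of the underlying multiset is replaced by a smaller one.

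\emph{Step 1 (optimal values for a fixed modified set).} Fix any $S\subseteq\{1,\dots,K\}$ with $|S|=T$, the coordinates the attacker chooses to overwrite; the feasible replacement values lie in $[0,\infty)$, which is exactly the nonnegativity assumption on $d(\cdot,\cdot)$ (whatever distance the attacker forces is still $\ge 0$). Every feasible choice yields a vector $\mathbf s'$ agreeing with $\mathbf s$ off $S$, and the vector $\mathbf s^{0}$ obtained by setting every coordinate in $S$ to $0$ is coordinatewise $\le\mathbf s'$. By the sublemma, $R(\mathbf s^{0})\le R(\mathbf s')$. Hence, regardless of $S$, assigning the minimal feasible value (here $0$) to all of $S$ is optimal, and the attacker's problem reduces to choosing $S$.

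\emph{Step 2 (optimal modified set, by exchange).} It remains to show that $S^\star$, the set of indices of the $T$ largest entries of $\mathbf s$, minimizes $R(\mathbf s^{0})$ over all $T$-subsets. If $S\neq S^\star$ there exist $a\in S$ and $b\notin S$ with $s_a\le s_b$; put $S'=(S\setminus\{a\})\cup\{b\}$. Writing $\mathbf u$ and $\mathbf u'$ for the zeroed-out vectors of $S$ and $S'$, one checks that $\mathbf u$ and $\mathbf u'$ have identical multisets except that $\mathbf u$ contains the value $s_b$ (at position $b$) exactly where $\mathbf u'$ contains $s_a$ (at position $a$); all other entries, including the $T-1$ common zeros and the single zero coming from one of $a,b$, coincide. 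Thus $\mathbf u'$ is obtained from $\mathbf u$ by replacing one multiset element $s_b$ with the smaller $s_a$, so the corollary gives $R(\mathbf u')\le R(\mathbf u)$. Iterating this exchange finitely many times turns any $S$ into $S^\star$ without ever increasing $R$. Combining Steps 1 and 2 proves the lemma: the attacker minimizes the trimmed sum by overwriting the $T$ largest distances with $0$.

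\emph{Expected main obstacle.} The subtle point is Step 2: a naïve entrywise exchange does not work, since un-zeroing index $a$ raises its value from $0$ back to $s_a$, so the two candidate vectors are \emph{not} comparable coordinatewise in place. The remedy — and the part that must be written with care — is to argue at the level of multisets (equivalently, to exhibit a re-matching of positions under which comparability does hold) and then invoke permutation-invariance of $R$ together with the ``replace one element by a smaller one'' corollary; the rest is routine bookkeeping. A secondary remark worth one sentence in the final write-up is that neither reduction uses any relation between $T$ and $M$, even though the subsequent closed-form evaluation of the bounds in Theorem~\ref{thm:bounds} does split on whether $T^c\le M$.
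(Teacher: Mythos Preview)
Your proof is correct and follows essentially the same two-step decomposition as the paper: first fix the modified index set and argue the optimal replacement value is the minimum, then optimize over the index set. The only notable difference is cosmetic --- in Step~2 you use an iterated single-element exchange while the paper compares the two candidate multisets directly in one shot --- and your explicit order-statistic sublemma makes the monotonicity that both arguments rely on cleaner than the paper's informal ``increasing any element neither decreases the sum nor shifts the trimmed range left.''
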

\begin{proof}

The attacker's goal is to minimize $R$ by modifying $T$ elements of $\mathbf{s}$. After the modifications, the distance vector $\mathbf{s}$ is updated, and the vector $\mathbf{p}$ is re-sorted. This re-sorting affects the trimmed sum $R$ because the trimmed range $\mathbf{p}_{\text{trimmed}} = [p_{M+1}, p_{M+2}, \cdots, p_{K-M}]$ depends on the sorted values. We aim to prove that replacing the $T$ largest distances in $\mathbf{s}$ with the smallest possible value is optimal. 

\textbf{1) Optimal target value is the smallest value.}
 Firstly, we prove that once the $T$ elements (to be modified) are decided, the optimal solution is to modify them into the smallest values. Let $p_{m1},p_{m2},\cdots,p_{mT}$ denote the selected $T$ elements, and $p'_{m1},p'_{m2},\cdots,p'_{mT}$ denote the value after modification. 
 The trimmed sum $\sum_{i=M+1}^{K-M} p_i$ depends on the elements in the trimmed range $\mathbf{p}_{\text{trimmed}} = [p_{M+1}, p_{M+2}, \cdots, p_{K-M}]$ after sorting. 
 Obviously, increasing any element of $ p'_{mi}$ neither decreases the sum nor shifts the trimmed range $\mathbf{p}_{\text{trimmed}}$ further to the left. That is, increasing any element of $ p'_{mi}$ can not reduce $R$. 
 As a result, modifying it to the smallest possible value (e.g., $0$) ensures the minimal $R$.

\textbf{2) Optimal element selection is the largest $T$ elements.}

Next, we prove that our selection that modifying the $T$ largest elements in $\mathbf{p}$: $\{p_{K-T},\cdots,p_{K}\}$ is optimal. If there is an optimal selection that differs from our selection, we denote it as $\{p_{o1},p_{o2},\cdots,p_{oT}\}$. Assume that there are $l$ different elements between our selection and the optimal selection. After modification and sorting, we know that there are exactly $l$ elements that are different because all the selected elements are modified into the smallest value. Furthermore, after sorting, the $l$ elements in the optimal strategy are equal to or larger than our strategy, because our selection is the $T$ largest elements, and any other selected elements should be smaller than or equal to our selected elements. We know that, given any sequence $[s_1, s_2, \cdots, s_K]$, increasing any element of $s_i$ neither decreases the sum nor shifts the trimmed range $\mathbf{p}_{\text{trimmed}}$ further to the left. That is, increasing any element of $s_i$ can not reduce $R$. Then, our strategy must obtain the minimal $R$.

For example, given a sequence $\{1,2,3,4,5\}$, and $T=2$, our strategy is to select $4,5$: $\{1,2,3,4,5\}\rightarrow\{0,0,1,2,3\}$. Assume that the optimal strategy has one element that differs from ours. If the different element is $4$, then the selected element must be smaller than $4$; it should be among $\{1,2,3\}$. Whichever it chooses, it is smaller than $4$. Because it did not choose $4$, then the $4$ is remain unchanged in the sequence: $\{1,2,3,4,5\}\rightarrow\{0,0,2,3,4\}$. There is always one element greater than the list obtained by our strategy, and it could not obtain a smaller trimmed sum $R$.
\end{proof}

\setcounter{thm}{0}
\begin{thm}
(Restate) Let $p_1^c\geq p_2^c\geq\cdots\geq p_K^c$ denote the sorted sequence of $d(\mathbf{f}_{\text{test}},\mathbf{f}_{ci})$, $i=1,\cdots,K$, and $q_1^c\geq q_2^c \geq \cdots \geq q_K^c$ denote the sorted sequence of $d(\mathbf{f}_{ci},\mathbf{t}_c)$, $i=1,\cdots,K$. Let $R^c$ be the classification score defined in Eq.~\eqref{eqn:R^c}. Suppose the attacker can arbitrarily modify $T^c$ features of the support samples among $\{\mathbf{f}_{ci}|i=1,\cdots, K\}$. If $T^c\leq M$, we have the upper bound and lower bound of the classification score $R^c$ for a given class $c$ and a given perturbation size $T^c$:
$$\overline{R}^c(T^c)=\sum_{i=M+1+T^c}^{K-M+T^c} p_i^c +\frac{\lambda}{K-2M} (\sum_{i=M+1+T^c}^{K-M+T^c} q_i^c) d(\mathbf{f}_{\text{test}},\mathbf{t}_c).$$
$$\underline{R}^c(T^c)=\sum_{i=M+1-T^c}^{K-M-T^c} p_i^c +\frac{\lambda}{K-2M} (\sum_{i=M+1-T^c}^{K-M-T^c} q_i^c) d(\mathbf{f}_{\text{test}},\mathbf{t}_c).$$
\end{thm}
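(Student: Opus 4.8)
The plan is to reduce Theorem~\ref{thm:bounds} to the trimmed-sum manipulation of Lemma~\ref{thm:lemma}, applied coordinate-wise, together with its maximization mirror and a short re-sorting count. First I would decompose $R^c$ from Eq.~\eqref{eqn:R^c} as a nonnegatively weighted sum of two trimmed sums: $\sum_{i=M+1}^{K-M}p_i^c$ over the sorted distances $\{d(\mathbf{f}_{\text{test}},\mathbf{f}_{ci})\}$, and $\sum_{i=M+1}^{K-M}q_i^c$ over the sorted distances $\{d(\mathbf{f}_{ci},\mathbf{t}_c)\}$, the latter carrying the factor $\frac{\lambda}{K-2M}\,d(\mathbf{f}_{\text{test}},\mathbf{t}_c)$. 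This factor is a genuine constant under the threat model: $\mathbf{f}_{\text{test}}$ is the clean query embedding and $\mathbf{t}_c$ depends only on the fixed class name $y_c$, so no allowed poisoning of the support set touches it; moreover $\lambda>0$, $K-2M>0$, and $d(\cdot,\cdot)\ge 0$, so the factor is nonnegative. Consequently $R^c$ is non-decreasing in each of the two trimmed sums, and it suffices to bound each trimmed sum separately: the upper (resp. lower) bound of $R^c$ is assembled from the upper (resp. lower) bounds of the two trimmed sums.

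Next I would settle what the adversary can do to each list. Replacing the feature of one class-$c$ support sample $\mathbf{f}_{ci}$ perturbs exactly one entry of the (pre-sort) vector behind $\{p_i^c\}$ and exactly one entry, at the same index, of the vector behind $\{q_i^c\}$. I would relax this shared-index coupling, letting the adversary choose, independently, any $T^c$ entries of the $p$-list and any $T^c$ entries of the $q$-list; this can only enlarge the adversary's power, so any bound proved in the relaxed model remains valid for the original one. In the relaxed model the $p$-part and $q$-part are fully decoupled, and I can apply the trimmed-sum results to each list in isolation.

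For the lower bound I would invoke Lemma~\ref{thm:lemma} directly on each list: the minimizing strategy replaces the $T^c$ largest entries by $0$. For the upper bound I would state the symmetric fact --- the maximizing strategy replaces the $T^c$ smallest entries by the largest admissible value --- whose proof is obtained from the proof of Lemma~\ref{thm:lemma} by interchanging ``largest''$\leftrightarrow$``smallest'' and ``decrease''$\leftrightarrow$``increase'' throughout; its two ingredients (monotonicity of a contiguous block of order statistics in each input, and the exchange argument pushing the modified indices to the extreme end of the sorted list) carry over verbatim. What remains is a single count. Take a list sorted in increasing order and, for the upper bound, delete its $T^c$ smallest entries and append $T^c$ copies of a value at least as large as the current maximum; the re-sorted list is the surviving entries (originally at positions $T^c{+}1,\dots,K$) followed by the $T^c$ appended copies, so for $T^c\le M$ the retained window at positions $M{+}1,\dots,K{-}M$ sits wholly among the survivors, where position $j$ holds the original entry at position $j{+}T^c$; this gives window sum $\sum_{i=M+1+T^c}^{K-M+T^c}p_i^c$. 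For the lower bound, delete the $T^c$ largest and prepend $T^c$ zeros; the re-sorted list is $T^c$ zeros followed by the survivors (originally at positions $1,\dots,K{-}T^c$), and for $T^c\le M$ the window holds the original entries at positions $M{+}1{-}T^c,\dots,K{-}M{-}T^c$, giving $\sum_{i=M+1-T^c}^{K-M-T^c}p_i^c$. Running the identical count on the $q$-list and reassembling with the constant factor $\frac{\lambda}{K-2M}\,d(\mathbf{f}_{\text{test}},\mathbf{t}_c)$ yields precisely the stated $\overline{R}^c(T^c)$ and $\underline{R}^c(T^c)$.

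The hard part is not any single calculation but pinning down the two structural observations cleanly (the coupling relaxation, and the monotone decomposition that lets the $p$- and $q$-sums be optimized separately) and then getting the re-sorting count exactly right. The genuinely delicate point is the hypothesis $T^c\le M$: it is exactly what forces the block of $T^c$ extremal entries to land entirely inside the trimmed-off region, so the window merely shifts by $\pm T^c$ among the original order statistics and the clean closed form holds; the moment $T^c>M$ the block intrudes into the window, the pure shift fails, and one must bring in a box constraint such as the cosine range $[0,2]$ (producing the ``$+2(T^c-M)$'' terms and the altered index ranges in the companion case of the theorem), with no finite bound possible for unbounded distances. I would therefore package the $T^c\le M$ count as a one-line sub-claim and invoke it for both lists. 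Finally, I note that the formulas are most naturally read with the trimmed sums taken over distances sorted in \emph{increasing} order --- matching the convention of Lemma~\ref{thm:lemma} --- so that the larger indices $M{+}1{+}T^c,\dots,K{-}M{+}T^c$ genuinely select the larger distances and therefore furnish the upper bound.
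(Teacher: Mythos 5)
Your proof follows essentially the same route as the paper's: decompose $R^c$ into its two nonnegatively weighted trimmed sums, bound each separately via Lemma~\ref{thm:lemma} (and its maximization mirror), and read off the $\pm T^c$ shift of the retained window when $T^c\le M$. You are in fact more explicit than the paper on two points it leaves implicit --- the relaxation of the shared-index coupling between the $p$- and $q$-lists, and the observation that the stated formulas require the ascending-order convention of the lemma rather than the descending order written in the theorem statement --- but these are refinements of the same argument, not a different one.
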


\begin{proof}
Because the distance values $p_i^c$, $q_i^c$, and $d(\mathbf{f}_{\text{test}},\mathbf{t}_c)$ are all non-negative, we can find the upper bound by upper bounding the first term and the second term separately. The upper (lower) bound of $R^c$ is achieved when the attacker maximizes (minimizes) each term in $R^c$, which consists of two components: the first term $\sum_{i=M+1}^{K-M} p_i^c$, and the numerator of the second term: $\sum_{i=M+1}^{K-M} q_i^c$.
From the lemma~\ref{thm:lemma}, we know that the attacker can maximize the trimmed sum by replacing the $T^c$ smallest elements of $\mathbf{p}^c$ with the largest possible values. After this modification, the new trimmed range is shifted to the right by $T^c$ indices. The attacker minimizes the trimmed sum by replacing the $T^c$ largest elements of $\mathbf{p}^c$ and $\mathbf{q}^c$ with the smallest possible values. After this modification, the new trimmed range is shifted to the left by $T^c$ indices.

\end{proof}

\begin{thm}
(Restate). Let $\mathcal{D}$ denote the clean support set for $C$-way $K$-shot few classification. The few-shot classifier $g$ is defined in \eqref{eqn:robust_classifier}. Let $\hat{y}=g(x_{\text{test}};\mathcal{D})$ represent the prediction for testing input $\text{test}$ with $\mathcal{D}$ as the support set.
We have a provably robust classification result:
\begin{align}
g(x_{\text{test}};\mathcal{D})=g(x_{\text{test}};\mathcal{D}_p),\, \forall \mathcal{D}_p\in \mathcal{B}(\mathcal{D},T),\nonumber
\end{align}
if:
\begin{align}
\overline{R}^{\hat{y}}(T^{\hat{y}})<\min_{c\neq \hat{y}} \underline{R}^c(T-T^{\hat{y}}), \forall T^{\hat{y}}: 0\leq T^{\hat{y}}\leq T.\nonumber
\end{align}

\end{thm}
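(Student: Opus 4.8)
The plan is to reduce the robustness claim to the bounds established in Theorem~\ref{thm:bounds}, together with a simple case analysis over how the adversary splits the budget $T$ across classes. First I would fix an arbitrary poisoned support set $\mathcal{D}_p \in \mathcal{B}(\mathcal{D}, T)$ and let $T^c$ denote the number of poisoned samples that $\mathcal{D}_p$ places in class $c$, so that $\sum_{c} T^c = T$ (we may as well assume equality, since using fewer poisons only weakens the adversary). Write $R^c(\mathcal{D}_p)$ for the robust classification score of class $c$ computed on $\mathcal{D}_p$. The goal is to show that $R^{\hat y}(\mathcal{D}_p) < R^c(\mathcal{D}_p)$ for every $c \neq \hat y$, since then $\arg\min_c R^c(\mathcal{D}_p) = \hat y = g(x_{\text{test}};\mathcal{D})$, which is exactly the claimed invariance of the prediction.

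The key step is to sandwich each score. Since poisoning class $\hat y$ uses $T^{\hat y}$ samples and poisoning any other class $c$ uses $T^c \le T - T^{\hat y}$ samples, Theorem~\ref{thm:bounds} gives $R^{\hat y}(\mathcal{D}_p) \le \overline{R}^{\hat y}(T^{\hat y})$ and $R^c(\mathcal{D}_p) \ge \underline{R}^c(T^c) \ge \underline{R}^c(T - T^{\hat y})$, where the last inequality holds because $\underline{R}^c$ is monotonically nonincreasing in its argument (a larger poisoning budget can only push the trimmed window further toward smaller distances, hence only decreases the lower bound — this monotonicity follows directly from the closed form of $\underline{R}^c$ given in Theorem~\ref{thm:bounds}, as the trimming window index shifts left as $T^c$ grows). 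Chaining these with the hypothesis $\overline{R}^{\hat y}(T^{\hat y}) < \min_{c \neq \hat y}\underline{R}^c(T - T^{\hat y})$ yields $R^{\hat y}(\mathcal{D}_p) < R^c(\mathcal{D}_p)$ for all $c \neq \hat y$, as desired. Because the certification condition is quantified over all $T^{\hat y} \in \{0,\dots,T\}$, and the actual $T^{\hat y}$ induced by $\mathcal{D}_p$ is one such value, the argument covers every possible $\mathcal{D}_p$.

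I would also handle the edge bookkeeping: if for some class $c \neq \hat y$ the adversary assigns $T^c > K - M - 1$, then per the discussion after Theorem~\ref{thm:bounds} the lower bound $\underline{R}^c$ may be vacuous, but such an allocation forces $T^{\hat y} = T - \sum_{c'\neq \hat y}T^{c'}$ to be correspondingly small, and the condition is still required to hold for that value of $T^{\hat y}$; alternatively, one restricts attention to the regime $T \le K - M - 1$ where all invoked bounds are finite, which is the setting the paper operates in. The main obstacle — though it is more a matter of care than of difficulty — is making the monotonicity of $\underline{R}^c$ in $T^c$ fully rigorous and confirming that passing from "the adversary's true split" to "the worst-case split over all $T^{\hat y}$" loses nothing; once that is pinned down, the rest is a direct substitution into the bounds of Theorem~\ref{thm:bounds}.
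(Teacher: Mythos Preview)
Your proposal is correct and follows essentially the same approach as the paper's proof: both argue that for any poisoned $\mathcal{D}_p$, the induced split $(T^{\hat y}, T - T^{\hat y})$ is covered by the quantified certification condition, and then invoke the upper/lower bounds of Theorem~\ref{thm:bounds} to sandwich the perturbed scores. Your version is in fact more careful than the paper's own proof, which does not explicitly address the monotonicity of $\underline{R}^c$ needed when the adversary spreads the remaining budget $T - T^{\hat y}$ over several classes $c \neq \hat y$; you correctly identify and justify this step.
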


\begin{proof}
The classifier $g$ predicts class $\hat{y}$ if and only if its classification score $R_{\hat{y}}$ is smaller than the scores $R^c$ for all other classes $c \neq \hat{y}$. Under poisoning, the robustness condition requires that the upper bound of $R_{\hat{y}}$ remains strictly smaller than the lower bound of $R^c$ for all $c \neq \hat{y}$. In~\eqref{eqn:certify_condition}, $\underline{R}^{\hat{y}}(T^{\hat{y}})$ is the lower bound of the score for class $\hat{y}$ when $T^{\hat{y}}$ samples in its support set are poisoned, and $\overline{R}^c(T - T^{\hat{y}})$ is the upper bound of the score for class $c$ when $T - T^{\hat{y}}$ samples in its support set are poisoned. 
The inequality ensures that even under the worst-case allocation of the attack budget $T$, the score for the true class $\hat{y}$ remains strictly smaller than the scores for all other classes $c \neq \hat{y}$. Thus, the classifier's prediction is robust to any poisoning attack within the predefined budget $T$.
\end{proof}

\paragraph{Probabilistic bounds for LeFCert-L and LeFCert-LD:}

In LeFCert-L and LeFCert-LD, we apply Gaussian noise to each input, and conduct Monte-Carlo to estimate the mean of the embedding $f_s(x) = \mathbb{E}_{\epsilon \sim \mathcal{N}(0, \sigma^2 I)} f(x + \epsilon)$. As a result, there is randomness in the mean embedding. We employ Hoeffding inequality to bound the mean embedding. Hoeffding inequality is known as:
$$P\left(\left|\frac{1}{n} \sum_{i=1}^n z_i - \mu\right| \geq t \right) \leq 2 \exp\left(-\frac{2nt^2}{(b-a)^2}\right),$$
where \(\mu = \mathbb{E}[z_i]\) is the true mean, \(n\) is the number of samples, \([a, b]\) is the interval for the random variable \(z_i\), \(t\) is the maximum deviation of the sample mean from the true mean.

To compute a confidence interval for the true mean \(\mu\), we set the right-hand side of Hoeffding's inequality to a predefined confidence level \(1-\alpha\). That is:
$$
2 \exp\left(-\frac{2nt^2}{(b-a)^2}\right) = \alpha.
$$
Taking the natural logarithm and solving for \(t\), we have:
\begin{equation}
\label{eqn:deviation}
    t = \sqrt{\frac{(b-a)^2 \ln(2/\alpha)}{2n}}=(b-a)\sqrt{\frac{ \ln(2/\alpha)}{2n}}.
\end{equation}

Thus, the confidence interval for \(\mu\) is given by:
$
\mu \in \left[\bar{z} - t, \bar{z} + t \right]
$, where \(\bar{z} = \frac{1}{n} \sum_{i=1}^n X_i\) is the sample mean, and $t$ is the deviation.

In our model, we normalize the output embedding such that $\|\mathcal{F}_{enc}(x)\|_2 = 1$. Then we have $||\mathbf{f}_{\text{test}}||_2=1$ and $||\mathbf{f}_{ci}||_2=1$. 
Let $p_i^c$ denote the $l_2$-norm distance $d(\mathbf{f}_{\text{test}},\mathbf{f}_{ci})=||\mathbf{f}_{\text{test}}-\mathbf{f}_{ci}||_2$, and $q_i^c$ denote the distance $d(\mathbf{f}_{ci},\mathbf{t}_c)=||\mathbf{f}_{ci}-\mathbf{t}_c||_2$, $i=1,\cdots,K$.

By the triangle inequality for the $l_2$ norm, we have $d(\mathbf{f}_{\text{test}},\mathbf{f}_{ci})=||\mathbf{f}_{\text{test}}-\mathbf{f}_{ci}||_2\leq ||\mathbf{f}_{\text{test}}||_2+||\mathbf{f}_{ci}||_2=2$. Then $b-a=2$. 
Then we have:
\begin{align}    d(\bar{\mathbf{f}}_{\text{test}},\bar{\mathbf{f}}_{ci})-t &\leq d(\mathbf{f}_{\text{test}},\mathbf{f}_{ci})\leq d(\bar{\mathbf{f}}_{\text{test}},\bar{\mathbf{f}}_{ci})+t.
\end{align}

We can also normalize the text embedding as $\|\mathbf{t}_c\|_2 = 1$, Then we have:
\begin{equation}
d(\bar{\mathbf{f}}_{ci},\mathbf{t}_c)-t \leq d(\mathbf{f}_{ci},\mathbf{t}_c)\leq d(\bar{\mathbf{f}}_{ci},\mathbf{t}_c)+t.
\end{equation}

\section{Other Experimental Results}
\label{sec:more_results}
Due to the space limit, we put more experimental results in this section. The results are discussed in the main paper. 
\begin{figure}[!ht]
    \centering
    \subfigure[CIFAR-FS]{\includegraphics[width=0.235\textwidth,height=2.95cm]{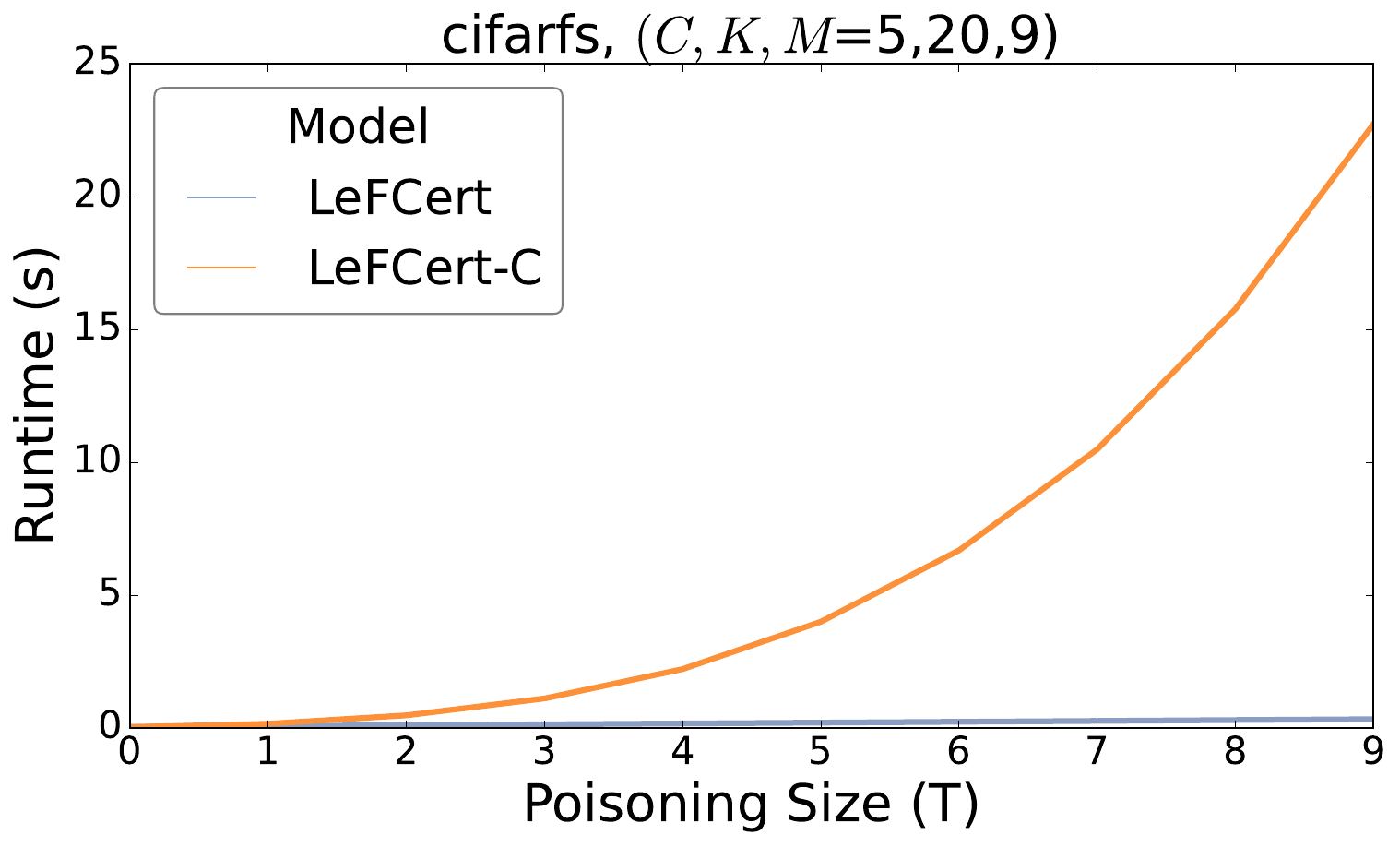}}
    \subfigure[Tiered-ImageNet]{\includegraphics[width=0.235\textwidth,height=2.95cm]{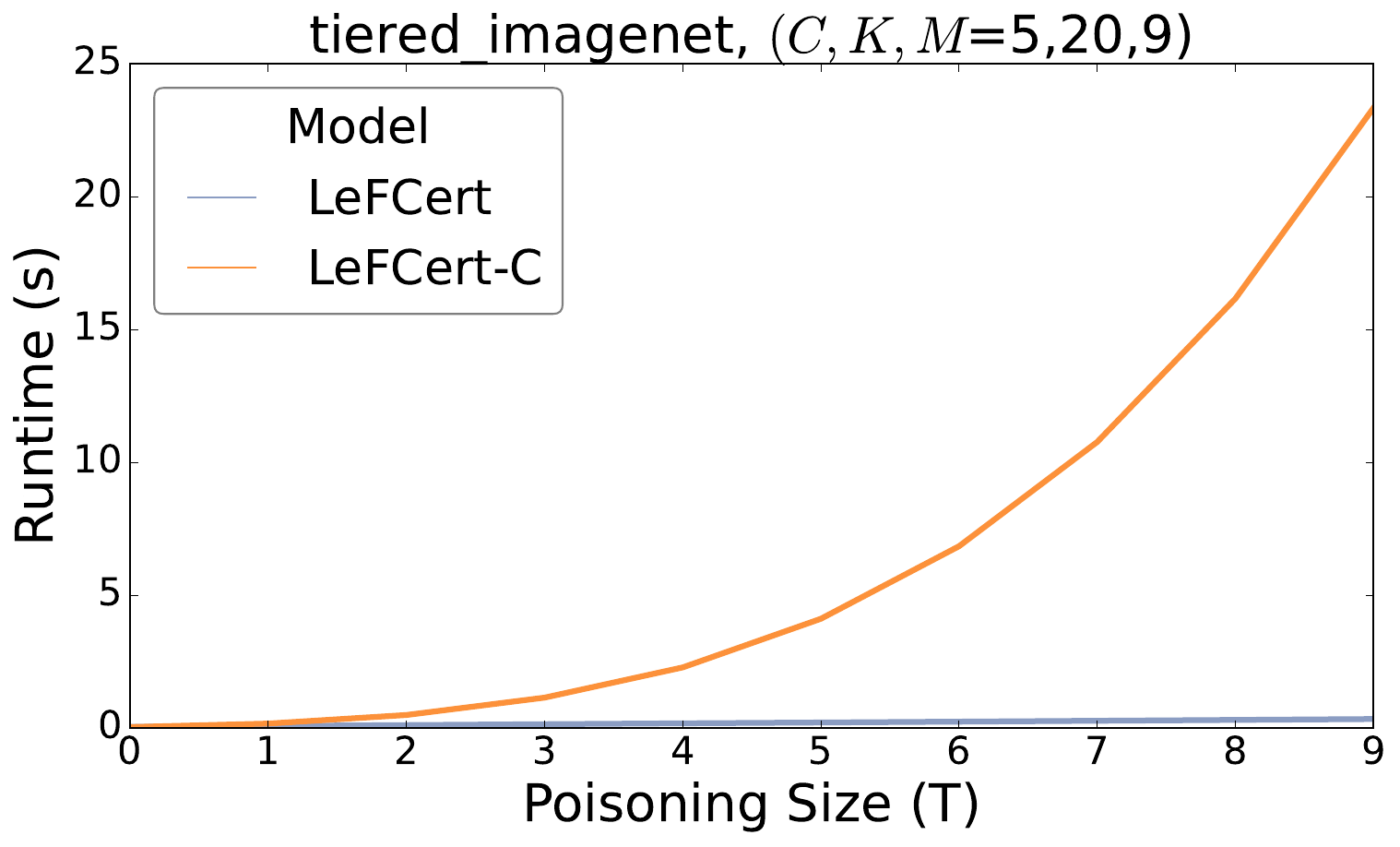}}
    \caption{Runtime of budget allocation algorithm in LeFCert-C. Although the runtime increases significantly with larger budgets, it is important to note that in few-shot scenarios, the number of shots $K$ is typically small, resulting in a limited budget range ($T \leq K-M-1$). \textbf{LeFCert-C is computationally practical for few-shot learning (less than 25s, K=20)}.}
\label{fig:time_Collect}
\end{figure}

\begin{figure}[!ht]
    \centering
    \includegraphics[width=1.0\linewidth]{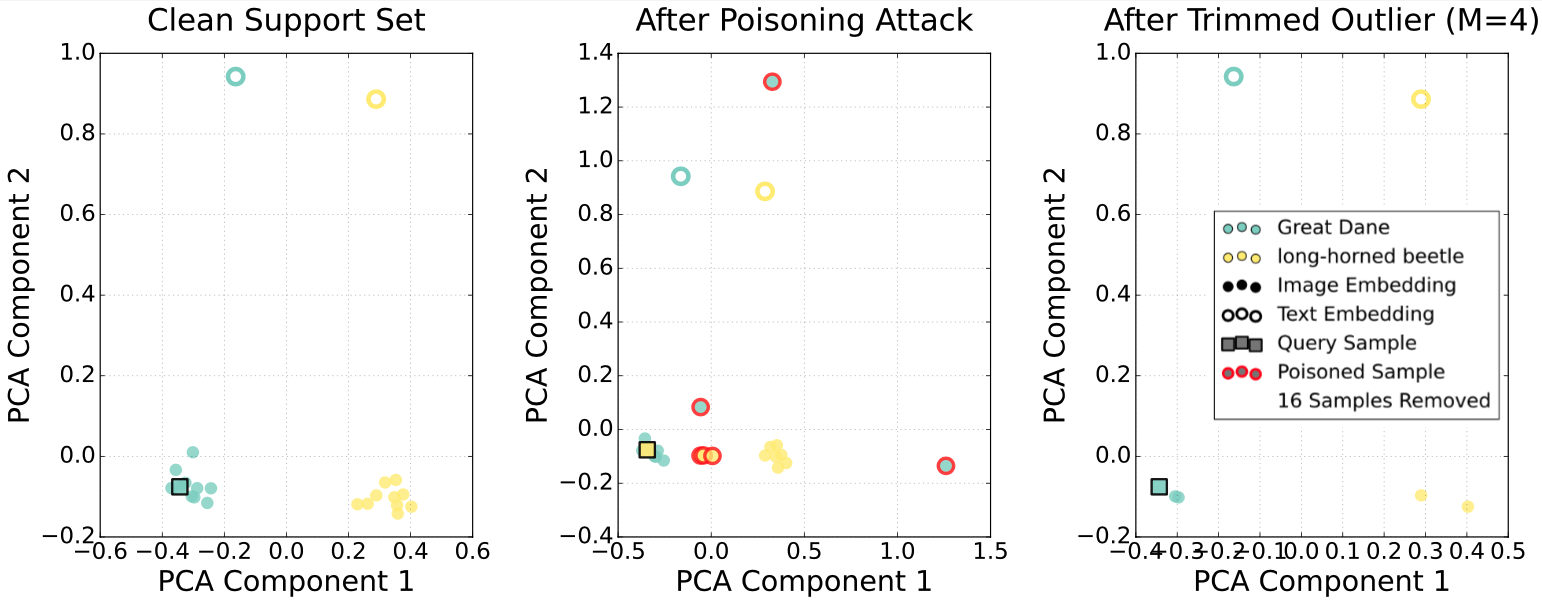}
    \caption{Feature and text embedding Principal Component Analysis (PCA) visualization: before attack, under attack, with LeFCert defense. We randomly select two classes from the Tiered-ImageNet datasets. We employ a simple poisoning attack on the support set with poisoning size $T=6$. Specifically, we select 3 of the support samples of the query class (Great Dane), modify their features to be far away from the query feature, and select 3 of the support samples of the target class (Long-horned beetle), modify their features to be close to the query feature. \textbf{After the poisoning attack, the query prediction is modified to the target class successfully}. \textbf{Nevertheless, with our LeFCert that trims the outlier, the query sample prediction is robust to the attack}. }
    \label{fig:pca}
\end{figure}
\end{document}